\documentclass[11pt]{article}
\usepackage[a4paper,total={6.5in,9.5in}]{geometry}
\usepackage[english]{babel}

\usepackage[utf8]{inputenc} 
\usepackage[T1]{fontenc}    
\usepackage[colorlinks=true,linkcolor=blue]{hyperref}      
\usepackage{url}           
\usepackage{booktabs}       
\usepackage{amsfonts}      
\usepackage{nicefrac}      
\usepackage{microtype}
\usepackage{amsthm}
\usepackage{natbib}
\usepackage{centernot}
\usepackage[dvipsnames]{xcolor}

\usepackage{float}
\usepackage{graphics}
\usepackage{subcaption}
\graphicspath{{./figures/}}
\DeclareGraphicsExtensions{.pdf}

\usepackage{algorithm}
\usepackage{algpseudocode}
\usepackage{indentfirst}
\usepackage{mathtools}
\usepackage{amsmath}
\usepackage{tabularx} 
\usepackage{multirow}        
\usepackage{caption}            
\usepackage{bm}
\usepackage{algorithm}
\usepackage{algpseudocode}
\usepackage{amsthm}
\newtheorem{lemma}{Lemma}  
\usepackage{amssymb}
\usepackage[capitalize]{cleveref}

\usepackage{nicefrac}

\usepackage{comment}

\usepackage{amsthm}
\usepackage{graphicx}

\usepackage{float} 
\usepackage{appendix}
\usepackage{float}  

\newcommand{\independent}{\perp\mkern-9.5mu\perp}
\newcommand{\notindependent}{\centernot{\independent}}

\newtheorem{assumption}{Assumption}

\newtheorem{theorem}{Theorem}
\newtheorem{corollary}[theorem]{Corollary}
\newtheorem{proposition}[theorem]{Proposition}

\newtheorem{example}{Example}[section]
\theoremstyle{remark}
\newtheorem{remark}{Remark}
\newcommand{\ci}[3]{#1 \perp\kern-5pt \perp #2 \mid #3}
\theoremstyle{definition}

\title{MinShap: A Shapley-Based Framework for Feature Redundancy}

\author{Chenghui Zheng
\thanks{Department of Statistics, University of Wisconsin - Madison, Madison, WI 53706}
\and 
Garvesh Raskutti
\footnotemark[1]
}

\date{\vspace{-5ex}}

\begin{document}
\maketitle
\begin{abstract}
Shapley values provide a flexible framework for attributing feature contributions to model predictions, but they are not naturally suited for feature selection: a feature may receive a positive attribution even when it is redundant given the remaining variables. In this paper, we introduce \textbf{MinShap}, a general framework for identifying \emph{important} or \emph{non-redundant} features through conditional importance functionals $VI_j^S$. Rather than averaging feature contributions across conditioning sets, MinShap aggregates them using the \emph{minimum}, thereby testing whether a feature remains relevant under every conditioning context. We show that, under a simple \emph{null monotonicity} condition, the minimum aggregation exactly characterizes feature redundancy and yields a principled feature selection criterion. This perspective provides a unified framework for statistical feature selection and representation-based interpretability while retaining the stability advantages of Shapley-style aggregation. We develop scalable algorithms with statistical guarantees, establish connections to multiple-testing procedures, and demonstrate through theory and experiments that MinShap produces more accurate and stable feature selection than existing model-agnostic approaches.
\end{abstract}

\section{Introduction}

Feature attribution and feature selection are often treated as closely related problems, yet they address fundamentally different scientific questions. Feature attribution seeks to quantify \emph{how much} a feature contributes to a model prediction, whereas feature selection seeks to determine whether a feature is \emph{redundant} once all remaining information is available. A feature may contribute substantially to prediction while nevertheless be redundant because its information is already contained in other variables. Distinguishing between contribution and redundancy is therefore essential for constructing reliable feature selection procedures.

Shapley values~\cite{shapley_value_1953} provide one of the most widely used frameworks for feature attribution in interpretable machine learning~\cite{sundararajan_many_2020,janzing_feature_2020,chen_algorithms_2023,nazir_survey_2025}. Given a prediction function
\[
f:\mathcal{X}^p\rightarrow\mathcal{Y},
\]
the Shapley value of feature $X_j$ is
\begin{equation}
\label{def1}
\phi_j
=
\sum_{S\subseteq[p]\setminus\{j\}}
\frac{1}{p}
\binom{p-1}{|S|}^{-1}
\left\{
V(f(X_{S\cup\{j\}}))
-
V(f(X_S))
\right\},
\end{equation}
where $V(\cdot)$ denotes a value functional over feature subsets. One of the strengths of the Shapley framework is that different choices of the value function recover statistical, causal, functional, and other notions of feature importance~\cite{fryer2021model,plischke2021computing}.

One of the key ingredient of the Shapley framework is its aggregation operator. Shapley values average the marginal contribution of a feature across all subsets (or equivalently all feature orderings~\cite{castro_polynomial_2009}), producing attribution scores that are stable and robust, particularly when predictors are correlated. However, averaging is naturally aligned with measuring \emph{average contribution}, not \emph{selection} or \emph{redundancy}. Consequently, a feature may receive a positive Shapley value even when it is conditionally redundant given the remaining variables~\cite{kumar_problems_2020,huber_conditional_2024}; that is,
\[
\phi_j>0,
\]
even though the feature provides no additional information once all remaining variables are observed.

Existing model-agnostic statistical feature selection methods instead target conditional importance directly. Popular examples include Leave-One-Covariate-Out (LOCO)~\cite{lei_distribution-free_2018} and the Generalized Covariance Measure (GCM)~\cite{shah_hardness_2020}, both of which are designed to identify features that remain informative after conditioning on the remaining covariates. While these approaches naturally target redundancy, their performance can deteriorate in the presence of strong feature dependence, higher-order interactions, or inaccurate nuisance estimation~\cite{wang_precision_2019,kaneko_crossvalidated_2022,maia_polo_conditional_2023,verdinelli_decorrelated_2024}. Thus, existing methods exhibit a fundamental trade-off: Shapley values provide stable attribution but do not directly identify significant and redundant features, whereas conditional importance methods target the correct notion of relevance but can be unstable.

In this paper we revisit the Shapley framework from a different perspective. Rather than asking whether the value function should be changed, we ask whether the \emph{aggregation operator} should depend on the scientific question being addressed. While averaging naturally quantifies average contribution, feature selection requires determining whether a feature remains relevant under \emph{every} conditioning context.

Our first major contribution is a general framework, called \emph{MinShap}, obtained by replacing the averaging operator in the Shapley value with the minimum. We introduce a simple \emph{null monotonicity} condition under which this minimum aggregation exactly characterizes feature redundancy. The resulting framework applies to arbitrary value functions and therefore unifies statistical feature selection and representation-based interpretability within a common aggregation principle. Under null monotonicity, MinShap transforms the Shapley framework from an attribution method into a principled feature selection framework while retaining the stability benefits that arise from aggregating information across multiple conditioning sets.

Our second major contribution specializes this general framework to model-agnostic statistical feature selection. Using predictive risk as the value function, we show that null monotonicity follows under standard assumptions for graphical models, develop scalable estimation and inference procedures with Type~I error guarantees, and establish connections between MinShap, Max-$p$ aggregation, and partial conjunction hypothesis testing~\cite{benjamini_screening_2008,williamson_general_2023}. Through simulation studies and real data analyses, we demonstrate that these procedures provide accurate and stable feature selection and compare favorably with existing model-agnostic approaches.

\subsection{Our contributions}

The contributions makes two complementary and related contributions.

\paragraph{Adapting the Shapley value framework for feature selection.}

\begin{itemize}
\item We introduce MinShap, a general feature selection framework obtained by replacing the averaging operator in the Shapley value with the minimum.

\item We identify the null monotonicity condition under which minimum aggregation exactly characterizes feature redundancy.

\item We show that the framework applies to both statistical feature selection and representation-based interpretability through appropriate choices of the value function.
\end{itemize}

\paragraph{Applying the general framework to model-agnostic statistical feature selection.}

\begin{itemize}
\item We specialize MinShap to statistical feature selection using predictive risk as the value function and show that null monotonicity follows under standard graphical assumptions.

\item We develop scalable MinShap algorithms with Type~I error guarantees and establish connections to Max-$p$ aggregation and partial conjunction hypothesis testing.

\item Through simulations and real data analyses, we demonstrate that MinShap and the associated multiple-testing procedures provide accurate and stable feature selection compared with existing model-agnostic approaches including LOCO, GCM, and the Lasso.
\end{itemize}
\subsection{Related work}

\paragraph{Shapley values and feature selection.}
The Shapley value, originally introduced in cooperative game theory~\cite{shapley_value_1953}, has become one of the most widely used frameworks for feature attribution in machine learning~\cite{williamson_efficient_2020,tripathi_feature_2020,covert_understanding_2020,patel_game-theoretic_2021}. Existing SHAP-based feature selection methods typically rank features using global summaries such as mean absolute SHAP values~\cite{lundberg_unified_2017}, followed by a user-specified threshold~\cite{tripathi_interpretable_2020,marcilio_explanations_2020,jothi_predicting_2021}. While simple and model-agnostic, these approaches generally lack formal statistical guarantees and inherit the semantics of Shapley values, namely measuring average feature contribution rather than conditional dependence. More recent methods such as BorutaSHAP~\cite{keany_borutashap_2020} and SHAP-XRT~\cite{teneggi_shap-xrt_2023} improve thresholding and statistical inference, but require either repeated retraining or conditional randomization. In contrast, MinShap modifies the aggregation operator itself by replacing the average with the minimum. Under a null monotonicity condition, this directly characterizes feature redundancy while retaining the aggregation of Shapley values.

\paragraph{Model-agnostic feature selection.}
Model-agnostic feature selection methods seek to identify features that remain informative independently of the underlying predictive model. Popular examples include Leave-One-Covariate-Out (LOCO)~\cite{lei_distribution-free_2018}, which compares prediction error before and after removing a feature, and the Generalized Covariance Measure (GCM)~\cite{shah_hardness_2020}, which formulates feature selection as a conditional independence testing problem. Although these methods directly target conditional relevance, their performance can deteriorate under strong feature dependence, higher-order interactions, or inaccurate nuisance estimation~\cite{wang_precision_2019,kaneko_crossvalidated_2022,maia_polo_conditional_2023,verdinelli_decorrelated_2024}. Embedded approaches such as the Lasso~\cite{tibshirani_regression_1996} are computationally efficient but are inherently model-specific rather than model-agnostic.

\paragraph{Multiple testing and stability selection.}
The statistical MinShap framework admits a natural multiple-testing interpretation through the aggregation of conditional importance scores across conditioning sets. This connects our methodology to stability selection~\cite{meinshausen_stability_2010,shah_variable_2013}, which aggregates feature rankings across resampled datasets. In contrast, MinShap aggregates across conditioning sets or feature orderings induced by the Shapley framework, leading to a single theoretically justified test statistic and threshold under the null monotonicity assumption.

\section{Background}

\subsection{General setup}

Consider the supervised learning setting with observations
\[
\{(X^{(i)},Y^{(i)})\}_{i=1}^n,
\qquad
X^{(i)}=(X_1^{(i)},\ldots,X_p^{(i)})\in\mathcal X^p,
\]
and prediction function
\[
f:\mathcal X^p\rightarrow\mathcal Y.
\]

Throughout the paper we make no assumptions on either the data-generating distribution or the prediction function, allowing the framework to apply to statistical, causal, and representation-based notions of feature importance.

For a subset $S\subseteq[p]=\{1,\ldots,p\}$, let $X_S=\{X_j:j\in S\}$ denote the corresponding feature subset, and let
\[
X_{-S}=X_{S^c},
\qquad
X_{-j}=X_{[p]\setminus\{j\}}.
\]

\subsection{Shapley values and feature importance}

Shapley values~\cite{shapley_value_1953} provide an axiomatic framework for allocating the value of a coalition among its members and have become one of the most widely used approaches for feature attribution in interpretable machine learning~\cite{sundararajan_many_2020,janzing_feature_2020,chen_algorithms_2023,nazir_survey_2025}. Let
\[
V:2^{[p]}\rightarrow\mathbb R
\]
denote a value function, typically of the form
\[
V(S)=V(f(X_S)).
\]
The Shapley value for feature $X_j$ is
\begin{equation}
\label{def1}
\phi_j
=
\sum_{S\subseteq[p]\setminus\{j\}}
\frac1p
\binom{p-1}{|S|}^{-1}
\Bigl(
V(S\cup\{j\})-V(S)
\Bigr),
\end{equation}
which is the unique attribution satisfying the standard Shapley axioms~\cite{shapley_value_1953}.

It is convenient to define the conditional feature importance
\[
VI_j^S
:=
V(S\cup\{j\})-V(S),
\]
so that
\[
\phi_j
=
\sum_{S\subseteq[p]\setminus\{j\}}
\frac1p
\binom{p-1}{|S|}^{-1}
VI_j^S.
\]

Equivalently, Shapley values admit the permutation representation~\cite{castro_polynomial_2009}
\begin{equation}
\label{def2}
\phi_j
=
\frac1{p!}
\sum_{\pi\in\Pi(p)}
VI_j^\pi,
\end{equation}
where $VI_j^\pi$ denotes the marginal contribution of feature $j$ under permutation $\pi$. This interpretation views Shapley values as averaging conditional feature importance over all feature orderings and motivates the MinShap framework developed in the next section.

\subsection{Feature selection}

Our goal is to identify the subset of important or non-redundant features. Given a value function $V$, define the optimal feature subset by
\[
\widetilde S
\in
\arg\min_{|S|}
\{V(S)=V([p])\}.
\]

A particularly important instance is statistical feature selection, where
\[
V(S)
=
\sup_{f_S}
-
\mathbb E
\bigl[
\ell(Y,f_S(X_S))
\bigr].
\]

Under standard assumptions on the loss and hypothesis class, predictive sufficiency is equivalent to conditional independence.

\begin{lemma}[Predictive sufficiency]
\label{lem:predictive-sufficiency}
Suppose $\ell$ is a strictly proper loss and the Bayes predictor belongs to the hypothesis class. Then
\[
V(S)=V([p])
\iff
Y\independent X_{-S}\mid X_S.
\]
Consequently,
\[
\arg\min_{|S|}
\{V(S)=V([p])\}
=
\arg\min_{|S|}
\{Y\independent X_{-S}\mid X_S\}.
\]
\end{lemma}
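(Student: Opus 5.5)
The plan is to reduce both sides of the equivalence to statements about conditional distributions of $Y$. Since the Bayes predictor lies in the hypothesis class, the supremum defining $V(S)$ is attained by the Bayes act for $X_S$. Under a strictly proper loss, with predictions taken as distributions, this act is $f_S^*(X_S)=P_{Y\mid X_S}$. Hence
\[
V(S)=-\mathbb E\bigl[\ell(Y,P_{Y\mid X_S})\bigr],\qquad V([p])=-\mathbb E\bigl[\ell(Y,P_{Y\mid X})\bigr].
\]
I would first record that $V(S)\le V([p])$ always holds, because any function of $X_S$ is also a function of $X$, so the supremum over the larger class is at least as large.

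For ``$\Leftarrow$'', assume $Y\independent X_{-S}\mid X_S$. Then $P_{Y\mid X}=P_{Y\mid X_S}$ almost surely, so the two Bayes predictors coincide and $V(S)=V([p])$ immediately.

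For ``$\Rightarrow$'', I would condition on $X$ and use the tower property:
\[
V([p])-V(S)=\mathbb E\Bigl[\,\mathbb E\bigl[\ell(Y,P_{Y\mid X_S})\mid X\bigr]-\mathbb E\bigl[\ell(Y,P_{Y\mid X})\mid X\bigr]\Bigr].
\]
Given $X=x$, the variable $Y$ has law $Q=P_{Y\mid X=x}$. The integrand is therefore the divergence $d(Q,P)=\mathbb E_{Q}\ell(Y,P)-\mathbb E_Q\ell(Y,Q)$ with $P=P_{Y\mid X_S=x_S}$. Propriety gives $d\ge 0$, and strict propriety gives $d(Q,P)=0$ iff $P=Q$. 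If $V(S)=V([p])$, a nonnegative quantity has zero expectation, so it vanishes almost surely. Then $P_{Y\mid X}=P_{Y\mid X_S}$ almost surely, which is exactly $Y\independent X_{-S}\mid X_S$.

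The ``consequently'' part then follows at once. The two constraint sets $\{S: V(S)=V([p])\}$ and $\{S: Y\independent X_{-S}\mid X_S\}$ are identical, so their cardinality minimizers coincide.

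The main obstacle is interpretive and technical rather than combinatorial, and I would handle it in two ways.
\begin{itemize}
\item \emph{Meaning of ``strictly proper''.} The notion only makes sense when $f$ outputs a distribution, or a functional that determines the conditional law relevant to the independence claim. For point losses such as squared error, strict propriety only pins down $\mathbb E[Y\mid X]$, and the conclusion would be mean-independence. I would state the lemma with probabilistic predictions, $\mathcal Y$ being a space of distributions, and remark on the mean version.
\item \emph{Measurability and finiteness.} I would assume $\mathbb E\,\ell(Y,P_{Y\mid X})<\infty$, so that the difference of expectations is well defined. I would also use regular conditional distributions, so that the pointwise divergence argument is legitimate.
\end{itemize}
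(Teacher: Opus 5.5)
Your proof is correct and follows the same route as the paper. Both identify the Bayes act under a strictly proper loss as the conditional law $P_{Y\mid X_S}$, obtain ``$\Leftarrow$'' from $P_{Y\mid X}=P_{Y\mid X_S}$, and obtain ``$\Rightarrow$'' by showing that equal values force equal conditional laws. Where the paper only asserts that proper losses are ``strictly information-monotone,'' you prove this by writing $V([p])-V(S)=\mathbb E\bigl[d(P_{Y\mid X},P_{Y\mid X_S})\bigr]$ and applying strict propriety pointwise.

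Your caveat about point losses is also well founded. The paper lists squared loss with $\mathcal U(Y\mid X_S)=\mathbb E[\operatorname{Var}(Y\mid X_S)]$ as an instance, but there $V(S)=V([p])$ only gives $\mathbb E[Y\mid X]=\mathbb E[Y\mid X_S]$ almost surely. That is mean independence, not $Y\independent X_{-S}\mid X_S$.
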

We defer the proof to the Appendix \ref{pf_lemma_Predictive_sufficiency}. Searching over all $2^p$ subsets is computationally infeasible, motivating the feature-wise definition
\[
S^*
=
\left\{
j:
VI_j^{-j}>0
\right\},
\]
which, under standard uniqueness assumptions~\cite{statnikov_algorithms_2013}, coincides with the minimal predictive subset. Throughout the remainder of the paper we adopt this feature-wise formulation.

\subsection{Why Shapley values are insufficient for feature selection}

Equation~\eqref{def2} shows that Shapley values average feature importance over all conditioning sets. While this averaging improves stability, it does not characterize conditional redundancy.

In particular, a feature may satisfy
\[
VI_j^{-j}=0,
\]
where $VI_j^{-j}:=VI_j^{\{1,2,...,p\}/\{j\}}$, which means that it provides no additional information once all remaining features are observed, yet still receive a positive Shapley value because it contributes under other conditioning sets. Consequently,
\[
\phi_j=0
\Longrightarrow
VI_j^{-j}=0,
\qquad
\text{but}
\qquad
VI_j^{-j}=0
\not\Longrightarrow
\phi_j=0.
\]

The following example illustrates this phenomenon.

\begin{example}\label{ex:DAG}
    DAG: $X_1 \rightarrow X_2 \rightarrow X_3 \rightarrow  Y$, $X_2 = X_1 + \gamma; X_3 = X_2+\delta; Y=X_3+\epsilon$, where $X_j\sim N(0,1), j \in \{1,2,3\}$ and $\epsilon, \delta, \gamma \sim N(0,1)$ are independent of $X$.

\begin{table}[H]
  \centering
  \caption{Marginal contributions for features in chain DAG example.}
  \label{tab:example}
  \begin{small}
  \begin{tabular}{lccc}
    \toprule
    Permutation & $VI_{1}^{\pi}$ & $VI_{2}^{\pi}$ & $VI_{3}^{\pi}$ \\
    \midrule
    $[X_1, X_2, X_3]$ & 1 & 1 & 1 \\
    $[X_1, X_3, X_2]$ & 1 & 0 & 2 \\
    $[X_2, X_1, X_3]$ & 0 & 2 & 1 \\
    $[X_2, X_3, X_1]$ & 0 & 2 & 1 \\
    $[X_3, X_1, X_2]$ & 0 & 0 & 3 \\
    $[X_3, X_2, X_1]$ & 0 & 0 & 3 \\
    \bottomrule
  \end{tabular}
  \end{small}
\end{table}

The Shapley values for $(X_1, X_2, X_3)$ are:
\[
\phi_{1} = \frac{2}{6}, \quad 
\phi_{2} = \frac{5}{6}, \quad 
\phi_{3} = \frac{11}{6}.
\]
\end{example}

Even ranking features according to their Shapley values cannot generally recover the optimal feature subset (see, for example, the Markov boundary examples in~\cite{ma_predictive_2020,fryer_shapley_2021}). This limitation motivates replacing the averaging operator in~\eqref{def2} by an aggregation that directly targets conditional redundancy.

\section{Null Monotonicity and MinShap Algorithm in the Oracle Setting}\label{sec:minshap}
In this section we introduce our MinShap framework in the oracle population setting. First we motivate the algorithm by discussing and imposing the null monotonicity assumption.

\subsection{Motivating MinShap via the null monotonicity condition}

\begin{assumption}[Null monotonicity condition] \label{ass_monotonicity}
For every triple $(j, S, T)$ where $j \not \in T$ and $S \subseteq T$,
$$
VI_j^S = 0 \Rightarrow VI_j^T = 0
$$
\end{assumption}
In words, the null monotonicity condition ensures that if $X_j$ is not important in the presence of features $X_S$, it will remain unimportant if we add more features. This is a significantly weaker condition than requiring full monotonicity,
\[
VI_j^T \leq VI_j^S, \qquad S\subseteq T,
\]
which asserts that adding additional features can only decrease the importance
of feature $X_j$. Null monotonicity imposes only the much weaker requirement that
once a feature has zero conditional importance, it remains unimportant after
conditioning on additional features.


\begin{theorem}\label{thm:main}
 Under Assumption~\ref{ass_monotonicity},
$$
\phi_{j}^{(min)}:=\min_{S} VI_{j}^S = 0 \Leftrightarrow VI_{j}^{-j} = 0 
$$
or equivalently
$$
 VI_{j}^{-j} > 0\;\; \mbox{if and only if}\;\; VI_{j}^{S} > 0 \mbox{, for all}\;\;S.
$$    
\end{theorem}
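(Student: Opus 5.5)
The plan is to prove the two directions of the equivalence $\phi_j^{(min)}=0 \Leftrightarrow VI_j^{-j}=0$ separately. Here $S$ ranges over subsets of $[p]\setminus\{j\}$. A preliminary remark is needed because $\min_S VI_j^S = 0$ only makes sense as a characterization if the conditional importances are nonnegative. I will state explicitly that $VI_j^S \ge 0$ for all $S$. This holds for the value functions of interest: in the predictive-risk case $V(S)=\sup_{f_S} -\mathbb E[\ell(Y,f_S(X_S))]$, enlarging the feature set can only enlarge the hypothesis class, so $V(S\cup\{j\})\ge V(S)$. With nonnegativity, $\min_S VI_j^S = 0$ is equivalent to the existence of some $S\subseteq[p]\setminus\{j\}$ with $VI_j^S=0$.

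For the direction ($\Leftarrow$), suppose $VI_j^{-j}=0$. The set $S=[p]\setminus\{j\}$ is itself one of the conditioning sets in the minimum. Hence $\min_S VI_j^S \le VI_j^{-j} = 0$, and nonnegativity gives equality. This direction does not use Assumption~\ref{ass_monotonicity}.

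For the direction ($\Rightarrow$), suppose $\phi_j^{(min)}=0$, and pick a minimizer $S_0\subseteq[p]\setminus\{j\}$ with $VI_j^{S_0}=0$. Take $T=[p]\setminus\{j\}$, so that $S_0\subseteq T$ and $j\notin T$. Assumption~\ref{ass_monotonicity} applied to the triple $(j,S_0,T)$ then gives $VI_j^{T}=VI_j^{-j}=0$.

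The second formulation is the contrapositive of the first. $VI_j^{-j}>0$ holds iff $VI_j^{-j}\neq 0$, which by the first part holds iff $\min_S VI_j^S\neq 0$. Under nonnegativity this is equivalent to $VI_j^S>0$ for all $S$. The only delicate point is the nonnegativity of $VI_j^S$, which the statement leaves implicit; otherwise the argument is immediate. I would state nonnegativity as a standing convention, justified by the monotonicity of $V$ in its argument for the value functions considered. Without it, the ``$\Rightarrow$'' direction needs some $VI_j^S$ to equal zero, which a minimum of zero does not guarantee when negative values are allowed.
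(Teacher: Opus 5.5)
Your proof is correct and follows the paper's argument: you apply null monotonicity with $T=[p]\setminus\{j\}$ for ($\Rightarrow$) and take $S=[p]\setminus\{j\}$ for ($\Leftarrow$), and you make explicit the nonnegativity of $VI_j^S$ that the paper uses only implicitly. One correction to your closing remark: nonnegativity is needed for ($\Leftarrow$) and for the second formulation, but not for ($\Rightarrow$), because the minimum over the finitely many $S$ is attained, so $\min_S VI_j^S=0$ always yields some $S_0$ with $VI_j^{S_0}=0$.
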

\begin{proof}
By the null monotonicity assumption, if there exists an $S \subseteq \{X_1, X_2,...,X_p\}/\{X_j\}$ such that $VI_{j}^S = 0$ then $VI_{j}^{-j} = 0$ which proves the forward/right direction. For the reverse/left direction, simply choosing $S = \{X_1, X_2,...,X_p\}/\{X_j\}$ leads to the result.
\end{proof}

\subsection{Statistical risk, feature selection and null monotonicty}

For the case of statistical risk, we prove null monotonicity using the DAG modeling language~\cite{pearl_causal_2009} for multi-variate distributions. In particular value function $V(\cdot)$ takes the form:
$$
V(S) := V((f_S(X_S))) = \mathbb{E}[-\ell(Y, f_S(X_S))]
$$
for a loss function $\ell(\cdot,\cdot)$. A DAG represents directed relationships among variables, where nodes/vertices correspond to variables (e.g. features or outcome). DAGs define d-separation rules (see e.g.~\cite{spirtes_causation_2001,pearl_causal_2009}) and link them to conditional independence statements in a probability distribution via the faithfulness assumption listed below. 
\begin{assumption}\label{ass:faithful}
     (Faithfulness). The probability distribution $P$ is faithful to DAGs, such that the conditional independence relationships that hold in $P$ are exactly those implied by the DAGs via d-separation.   
\end{assumption}
As a consequence of the faithfulness assumption, we are able to claim that any independence relations in the data are caused by the underlying structure of the graph that generates it, so there is no accidental conditional independence arising from random coincidence or special cancellations. Without the faithfulness assumption, a Shapley summand could be zero because of accidental cancellation, even though there is still an active path in the graph. We also need an additional reverse causation assumption:

\begin{assumption}\label{ass:revser_cau}
      (No reverse causation). The outcome
$Y$ in DAGs does not causally influence any covariate, such that $Y$ is not ancestor of any vertex $X_j$ in DAGs.
\end{assumption}
No reverse causation is standard in machine learning because we assume that the response $Y$ is a function of the features $(X_1, X_2,...,X_p)$ and not the other way around. This assumption rules out $Y$ being a collider.

\begin{lemma}\label{lemma}
Under Assumptions ~\ref{ass:faithful} and ~\ref{ass:revser_cau}, the null monotonicity condition is satisfied. More precisely,
$$
Y \independent  X_j |X_S \Rightarrow Y\independent X_j |X_T
$$
for any $S \subseteq T \subseteq \{1,2,...,p\}/\{j\}$.
\end{lemma}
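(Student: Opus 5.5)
The plan is to move the question to the graph through faithfulness (Assumption~\ref{ass:faithful}), prove a d-separation statement in the DAG, and move back. Concretely, assume $Y \independent X_j \mid X_S$. By faithfulness this independence holds exactly when $Y$ and $X_j$ are d-separated by $X_S$, i.e.\ every path between $X_j$ and $Y$ is blocked by $S$. The target $Y \independent X_j \mid X_T$ will then follow, again by faithfulness (which gives the global Markov property in the direction we need), once we show that $T$ also d-separates $X_j$ from $Y$. The whole proof therefore reduces to a statement about paths, and I would organize it by how each path enters $Y$.

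First I would use Assumption~\ref{ass:revser_cau}. Since $Y$ is an ancestor of no covariate, $Y$ has no children among the $X$'s. Every path from $X_j$ to $Y$ therefore ends with an edge $X_k \to Y$ for some parent $X_k \in \mathrm{pa}(Y)$, and $Y$ itself never occurs as a collider or as an interior vertex. This gives the key structural fact: $Y$ is a sink.

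Second, I would take an arbitrary path $p$ from $X_j$ to $Y$ and ask whether enlarging $S$ to $T$ can unblock it. Non-colliders only become more blocking when we condition on more variables. So if $p$ is blocked by $S$ at a non-collider in $S$, it remains blocked by $T$. The only way $p$ can become active is that $p$ was blocked by $S$ solely at colliders $C$ with $\mathrm{de}(C)\cap S=\emptyset$, and $T$ now contains a descendant of each such collider. For this case I would argue that $p$ has its final segment $\cdots X_k \to Y$. I would then try to show that the open path can be rerouted into a path that was already open under $S$, which would contradict d-separation by $S$. The rerouting would go through the collider's descendant in $T\setminus S$, using the sink structure of $Y$: descendants of a collider flow away from it and cannot pass through $Y$.

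The main obstacle is exactly this collider step. A collider among covariates, of the form $X_j \to X_m \leftarrow X_k \to Y$ with $m \in T\setminus S$, is opened by conditioning on $X_m$. Assumption~\ref{ass:revser_cau} only rules out $Y$ itself being the collider; it says nothing about colliders among the covariates. I would therefore first attempt the rerouting argument above. If it fails, I would isolate precisely what is needed: for instance, that no vertex of $T\setminus S$ is a descendant of a collider on an $S$-blocked path. I would then check whether the intended reading of ``$Y$ is not a collider'' in the paper supplies this condition. I expect that this step cannot be completed from the two stated assumptions alone, so it is where the argument must be scrutinized most carefully.
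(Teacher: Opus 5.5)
Your reduction to d-separation is the right framework, and you are right about the collider step. But no cleverer rerouting can close it, because the configuration you wrote down is a counterexample to the lemma itself.

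Take covariates $X_j,X_k,X_m$ with DAG $X_j\to X_m\leftarrow X_k\to Y$, and let $S=\emptyset$, $T=\{m\}$. Assumption~\ref{ass:revser_cau} holds, since $Y$ is a sink. Consider the linear Gaussian model $X_j=\varepsilon_j$, $X_k=\varepsilon_k$, $X_m=aX_j+bX_k+\varepsilon_m$, $Y=cX_k+\varepsilon_Y$, with independent standard normal noises. For generic nonzero $a,b,c$ it is faithful, so Assumption~\ref{ass:faithful} holds as well.

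The only path between $X_j$ and $Y$ is blocked by $\emptyset$ at the collider $X_m$, so $Y\independent X_j$; indeed $\mathrm{Cov}(X_j,Y)=0$. Conditioning on $X_m$ opens the collider, while the non-collider $X_k\notin T$ stays open. The partial covariance is $-abc/\mathrm{Var}(X_m)\neq 0$, so $Y\notindependent X_j\mid X_m$. Hence $VI_j^{\emptyset}=0<VI_j^{\{m\}}$, and null monotonicity fails.

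The paper's short proof makes exactly the slip you suspected. It only considers a newly conditioned collider adjacent to $Y$ (the pattern $X_j\to X_k\leftarrow Y$, which no reverse causation rules out). It overlooks colliders among covariates further along a path into $Y$. So your proposal cannot be completed as a proof of the stated lemma; it has to be replaced by a weaker, true claim.

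What does hold is the case $T=[p]\setminus\{j\}$, which is what Theorems~\ref{thm:main} and~\ref{thm:ordering} actually use. More generally it holds for any $T\supseteq\mathrm{pa}(Y)$ with $j\notin T$. The argument:
\begin{itemize}
\item If $Y\independent X_j\mid X_S$ for some $S$, faithfulness forces $X_j$ and $Y$ to be non-adjacent, since a direct edge cannot be blocked.
\item Because $Y$ has no children, it follows that $j\notin\mathrm{pa}(Y)$.
\item All covariates are non-descendants of $Y$, so the local Markov property gives $Y\independent X_{[p]\setminus\mathrm{pa}(Y)}\mid X_{\mathrm{pa}(Y)}$.
\item Decomposition and weak union then yield $Y\independent X_j\mid X_T$ for every such $T$.
\end{itemize}

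The same observation repairs the permutation-counting step in the proof of Theorem~\ref{main_thm}. Work with $B=\mathrm{pa}(Y)$, which gives probability $1/(|\mathrm{pa}(Y)|+1)$ that $j$ is placed after $B$. This avoids relying on arbitrary minimal sets $B_i$, for which upward closure can fail.
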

\begin{proof}
The proof follows from the fact that since $S \subseteq \{1,...,p\}\backslash\{j\}$, using the language of DAG models~\cite{pearl_causal_2009, peters_causal_2014}, conditioning on an additional node $X_k$ with $k \in T$ can only create a path to $Y$ if it is a collider $X_j \rightarrow X_k$ and $Y \rightarrow X_k$. But under the no data leakage assumption, there can be no edge from $Y$ to the covariates. Therefore conditioning on an additional node cannot lead to conditional dependence. 
\end{proof}

\subsection{Permutation/ordering perspective of MinShap}

While the original definition is from the weighted aggregation over subsets~\cite{shapley_value_1953}, there is an equivalent definition from a permutation/ordering perspective following on from the permutation/ordering perspective of Shapley values~\cite{castro_polynomial_2009} described earlier. 

For an ordering $\pi \in \Pi(p)$, $VI_{j}^\pi$ is the Shapley value summand for feature $j$ captures the additional contribution of feature $j$ to the target given ordering set $[\pi]_{j-1}$. That is to say, under a true ordering $\pi^*$, 
$$
VI_{j}^{\pi^*} = 0 \Leftrightarrow Y \independent X_{j}|X_{{[\pi^*]_{j-1}}}.
$$
Since $\pi^*$ is typically unknown, if we assume $(X_1,...,X_p, Y)$ follow a recursive structural DAG model where $Y$ is a terminal node in the DAG~\cite{pearl_causal_2009, peters_causal_2014}, then we have the following result:
\begin{theorem}
\label{thm:ordering}
Suppose the conditional importance functional satisfies the null monotonicity
condition. Then
\[
\min_{\pi\in\Pi(p)} VI_j^\pi =0
\]
if and only if
\[
VI_j^{\{1,\ldots,p\}\setminus\{j\}}=0.
\]
Equivalently,
\[
VI_j^{\{1,\ldots,p\}\setminus\{j\}}>0
\quad\Longleftrightarrow\quad
VI_j^\pi>0
\quad\text{for every }\pi\in\Pi(p).
\]
\end{theorem}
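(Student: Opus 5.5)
The plan is to reduce Theorem~\ref{thm:ordering} to Theorem~\ref{thm:main} by translating orderings into conditioning sets. For $\pi\in\Pi(p)$, let $[\pi]_j$ denote the set of features preceding $j$ in $\pi$. By definition, $VI_j^\pi = VI_j^{[\pi]_j}$, with $[\pi]_j\subseteq\{1,\ldots,p\}\setminus\{j\}$.

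The first step is to show that the map $\pi\mapsto[\pi]_j$ is onto the collection of all subsets $S\subseteq\{1,\ldots,p\}\setminus\{j\}$. Given such an $S$, take any ordering that lists the elements of $S$ first, then $j$, then the rest. It follows that
\[
\{VI_j^\pi:\pi\in\Pi(p)\}=\{VI_j^S: S\subseteq\{1,\ldots,p\}\setminus\{j\}\},
\]
and hence
\[
\min_{\pi\in\Pi(p)} VI_j^\pi=\min_S VI_j^S=\phi_j^{(min)}.
\]

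The second step is to invoke Theorem~\ref{thm:main}, which under Assumption~\ref{ass_monotonicity} gives $\phi_j^{(min)}=0 \iff VI_j^{-j}=0$. That is exactly the first claim.

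For the equivalent formulation, one direction is immediate. If $VI_j^\pi>0$ for every $\pi$, then in particular it holds for any $\pi$ placing $j$ last, where $[\pi]_j=\{1,\ldots,p\}\setminus\{j\}$, so $VI_j^{-j}>0$.

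For the other direction, argue by contraposition. If $VI_j^\pi=0$ for some $\pi$, then $VI_j^{[\pi]_j}=0$. Since $[\pi]_j\subseteq\{1,\ldots,p\}\setminus\{j\}$ and $j\notin\{1,\ldots,p\}\setminus\{j\}$, null monotonicity with $T=\{1,\ldots,p\}\setminus\{j\}$ gives $VI_j^{-j}=0$.

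The only subtlety is sign. Writing ``$\min=0$'' tacitly assumes $VI_j^S\ge 0$, which holds for the predictive-risk value function because adding features cannot increase the optimal risk. For a general $V$, I would read the statement in the form ``$VI_j^\pi=0$ for some $\pi$,'' since that is the form the argument actually uses. I would state this nonnegativity convention explicitly, then note the surjectivity of $\pi\mapsto[\pi]_j$, which is the only genuinely new ingredient beyond Theorem~\ref{thm:main}.
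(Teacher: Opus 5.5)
Your proposal is correct and follows essentially the same route as the paper: identify each ordering with the set of features preceding $j$, use nonnegativity of the summands to read $\min_{\pi} VI_j^\pi=0$ as the existence of a zero summand, and conclude via null monotonicity, i.e., Theorem~\ref{thm:main}. Your proposal also makes explicit two points the paper's one-line proof leaves implicit. The surjectivity of $\pi\mapsto[\pi]_j$ is needed for the paper's claimed equivalence between permutations and subsets. The nonnegativity convention is used both for the reverse direction and for upgrading ``$\neq 0$'' to ``$>0$'' in the equivalent formulation.
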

\begin{proof}
Since each Shapley summand $VI_{j}^\pi$ is non-negative and each permutation $\pi$ can also be seen as a subset $S$, $\min_{\pi \in \Pi(p)} VI_{j}^{\pi} = 0$ is equivalent to the statement that there exists a valid subset $S$ such that $VI_{j}^{S} = 0$. Thus, under the monotonic null both assumptions hold.
\end{proof}
This permutation/ordering perspective leads to naturally faster algorithms in determining whether $\min_{\pi\in\Pi(p)} VI_j^\pi =0$ because if there exists any ordering $\tilde{\pi}$ such that $VI_j^{\tilde{\pi}} =0$, then every ordering that places $j$ later will also satisfy $0$ importance and we can discard many orderings.

\subsection{Monotonicity in mechanistic interpretability settings}
The focus of this paper is classical feature selection in statistics. However, the MinShap algorithm and null monotonicity assumption applies much more broadly. Here we consider the case where
$$
V(S) := V(f_S(X)) = \mathbb{E}[f_S(X)].
$$

In particular, modern mechanistic interpretability seeks to explain neural networks in terms of
intermediate computational units, such as concepts, sparse latent features, attention heads, or circuits, rather than directly in terms of the input
variables. Examples include concept-based methods such as sparse autoencoder (SAE)
representations~\cite{bricken_towards_2023, cunningham_sparse_2023} and recent
circuit-level analyses of transformer models~\cite{elhage_mathematical_2021,
wang_interpretability_2023}. These methods differ in how explanatory units are constructed, but all can be viewed as restricting a learned representation to a
subset of explanatory components.

Let $\mathcal U=\{u_1,\ldots,u_m\}$ denote a collection of explanatory units.
Depending on the algorithm/method, $u_j$ may represent an input concept, an SAE
feature, an attention head, or a computational circuit. For every subset
$S\subseteq\mathcal U$, let
\[
R_S:\mathcal H\rightarrow\mathcal H
\]
denote a \emph{representation restriction operator}, where $\mathcal H$
denotes the representation space. The restricted predictor is
\[
f_S(x)=g(R_S(h(x))),
\]
where $h(x)$ denotes an internal representation and $g$ maps the restricted
representation to the model output.

We define the conditional importance of explanatory unit $u_j$ by
\[
VI_j^S
=
\mathbb E\!\left[
f_{S\cup\{j\}}(X)-f_S(X)
\right] = \mathbb E\!\left[
g(R_{S\cup j}(h(X))) -g(R_S(h(X)))
\right]
\]  
This measures the additional contribution of explanatory unit $u_j$ once the
units in $S$ are already available.

Unlike the statistical setting, null monotonicity is not derived from conditional independence. Instead, it follows from assumptions on the structure of the learned representation. Null monotonicity in this setting arises from structural properties of the restriction operators rather than probabilistic conditional independence.

\begin{proposition}
If the \emph{representation restriction operator} $R_S$ satisfies 
$$
VI_j^S = 0 \Leftrightarrow R_{S \cup \{j\}} = R_S
$$
for all $S$ and
$$
R_{S \cup \{j\}} = R_S \Rightarrow R_{T \cup \{j\}} = R_T
$$
for all $S \subseteq T$, then
\[
VI_j^S = 0 \;\Rightarrow\; VI_j^T = 0
\quad \text{for } S \subseteq T,
\]
and hence null monotonicity holds.
\end{proposition}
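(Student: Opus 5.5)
The plan is a direct chaining of the two hypotheses. There are no distributional ingredients; the only object used is the definition
\[
VI_j^S=\mathbb E\!\left[g(R_{S\cup\{j\}}(h(X)))-g(R_S(h(X)))\right].
\]
We fix $j$ and sets $S\subseteq T$ with $j\notin T$, as in Assumption~\ref{ass_monotonicity}, and assume $VI_j^S=0$. The goal is to carry this zero from $S$ to $T$ by way of the restriction operators.

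\textbf{Step 1.} Apply the forward direction of the first hypothesis at $S$. From $VI_j^S=0$ it gives $R_{S\cup\{j\}}=R_S$.

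\textbf{Step 2.} Feed this into the second hypothesis, which is the structural monotonicity of the operators. Since $S\subseteq T$, it gives $R_{T\cup\{j\}}=R_T$.

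\textbf{Step 3.} Conclude $VI_j^T=0$. There are two ways to do this.
\begin{itemize}
\item Use the reverse direction of the first hypothesis at $T$.
\item Argue directly: if $R_{T\cup\{j\}}=R_T$ as operators on $\mathcal H$, then $f_{T\cup\{j\}}(x)=g(R_{T\cup\{j\}}(h(x)))=g(R_T(h(x)))=f_T(x)$ for every $x$. Hence the integrand is identically zero, and so is its expectation.
\end{itemize}
The direct argument is worth recording because it shows that only the direction $VI_j^S=0\Rightarrow R_{S\cup\{j\}}=R_S$ of the biconditional is genuinely needed. The reverse implication holds automatically, provided the expectation is well defined.

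\textbf{Final step.} Since $j$, $S$ and $T$ were arbitrary, this is exactly Assumption~\ref{ass_monotonicity} for this choice of $VI$. Theorem~\ref{thm:main} and Theorem~\ref{thm:ordering} then apply verbatim to the min-aggregated unit importances.

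I do not expect a real obstacle; the argument is a two-line chain. The only point needing care is bookkeeping. The first hypothesis must be read as holding for every subset, in particular for $T$ as well as $S$, and the index set must satisfy $j\notin T$ so that $T\cup\{j\}$ is a genuine enlargement.

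I would also remark why the first hypothesis is not vacuous. For general $g$, $VI_j^S=0$ could occur through cancellation in the expectation while $R_{S\cup\{j\}}\neq R_S$. The hypothesis is precisely the representation-level analogue of faithfulness (Assumption~\ref{ass:faithful}), ruling out such accidental zeros, just as Lemma~\ref{lemma} required in the statistical setting.
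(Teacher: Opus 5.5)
Your proof is correct and is the direct chain $VI_j^S=0\Rightarrow R_{S\cup\{j\}}=R_S\Rightarrow R_{T\cup\{j\}}=R_T\Rightarrow VI_j^T=0$ that the paper intends; the paper states this proposition without a written proof. Your observation that the last step already follows pointwise from $f_{T\cup\{j\}}=f_T$ is a valid small sharpening, and your closing claim that Theorems~\ref{thm:main} and~\ref{thm:ordering} then apply verbatim lies outside the proposition but additionally needs $VI_j^S\ge 0$, which holds for optimal predictive risk but not in general for $\mathbb E[f_{S\cup\{j\}}(X)-f_S(X)]$.
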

Several existing interpretability methods can be viewed as particular choices of
the restriction operator $R_S$.

\paragraph{Examples of representation restriction operators.}
The general framework encompasses several existing paradigms in mechanistic interpretability through different choices of the restriction operator $R_S$.

\begin{itemize}
\item \textbf{Sparse autoencoder (SAE) representations.}
Let
\[
z(x)=E(h(x))
\]
denote the sparse latent representation produced by an encoder $E$, and let
$D$ denote the corresponding decoder. For
$S\subseteq\{1,\ldots,m\}$, define the diagonal masking matrix
\[
M_S
=
\operatorname{diag}(\mathbf 1_{\{j\in S\}}),
\]
which sets latent coordinates outside $S$ equal to zero. The restriction
operator is
\[
R_S(h)
=
D(M_SE(h)),
\]
and the restricted predictor becomes
\[
f_S(x)
=
g(D(M_SE(h(x)))).
\]
Consequently,
\[
VI_j^S
=
\mathbb E\!\left[
f_{S\cup\{j\}}(X)-f_S(X)
\right]
\]
quantifies the additional contribution of sparse latent feature $j$ given the
latent features in $S$. Since sparse autoencoders are trained to induce approximately sparse linear decompositions of the representation space with features behaving approximately additively in the latent basis\cite{bricken_towards_2023,cunningham_sparse_2023}, approximate representation faithfulness assumption is particularly natural in this setting. Consequently,
for $S \subseteq T$
\[
VI_j^S \approx 0
\quad\Longrightarrow\quad
VI_j^T \approx 0,
\]
which is precisely the null monotonicity condition underlying the MinShap
framework. Deviations from null monotonicity may arise due to residual
feature interactions or imperfect disentanglement of the learned latent
representation.
\item \textbf{Circuit-level interpretability.}
Suppose the network consists of computational modules
\[
\mathcal U=\{u_1,\ldots,u_m\},
\]
where each $u_j$ may represent an attention head, multi-layer perceptron block, or identified
computational circuit
\cite{elhage_mathematical_2021,wang_interpretability_2023}. For
$S\subseteq\mathcal U$, define binary gating variables
\[
\gamma_j
=
\begin{cases}
1,&u_j\in S,\\
0,&u_j\notin S,
\end{cases}
\]
and let $R_S$ denote the operation that replaces every computational unit
outside $S$ by the zero map,
\[
u_j(h)
\longmapsto
\gamma_j\,u_j(h).
\]
Circuits aims to identify computational modules that perform relatively localized and reusable functions
\cite{elhage_mathematical_2021,wang_interpretability_2023}. Under the assumption that these modules behave approximately independently under ablation, if restoring a circuit has no effect given an active set $S$, then enlarging the active subnetwork is not expected to create a new contribution for that circuit. The null monotonicity assumption is most plausible when the extracted circuits are intended to represent relatively modular computational units. Violations of null monotonicity may arise when circuits exhibit strong redundancy, nonlinear interactions, or dynamic reconfiguration under ablation.

\end{itemize}

\section{MinShap and Related Algorithms for Statistical Feature Selection}

In this section we focus specifically on the statistical setting where
$$
V(S):=V(f_S(X_S)) = \mathbb{E}[-\ell(Y, f_S(X_S))].
$$

Now, let's present the main result about how to select the significant features by MinShap algorithm. Denote $V(f_n(X))$ as the estimator of $V(f(X))$, and ${\bf VI_{n,j}} = (VI_{n,j}^{\pi_1},...,VI_{n,j}^{\pi_K}) \in \mathbb{R}^K$ as the vector of $K$ estimators of feature importance statistics computed from sampled permutations, and the order statistics
$$
VI_{n,j}^{(1)}  \leq ... \leq VI_{n,j}^{(K)}.
$$ We define the MinShap test  statistic as $\phi_{n,j}^{(min)} := VI^{(1)}_{n,j}$. The MinShap algorithm resides in the idea that if $VI_{j}^{\pi} > 0$ for all $\pi$, we reject the null that $X_j$ and $Y$ are conditionally independent. Define $\sigma_{n,j}^{2,\pi}$ as the estimator of $\sigma^{2,\pi}_{j}  :=\mbox{Var}(\epsilon^2_{[\pi]_{j-1} \cup \{j\}} - \epsilon_{[\pi]_{j-1}}^2)$, where $\epsilon_{[\pi]}:=Y-f(X_{[\pi]})$. $\sigma_{n,j}^{2,(k)}$ is the associated order statistics, $k \in [K]$. We now formally state the validity of this test.
\begin{theorem}\label{main_thm}
     If assumptions \ref{ass:faithful} and \ref{ass:revser_cau} hold, then under the null hypothesis $H_0: \min_{\pi\in \Pi(p)} VI_{j}^{\pi} = 0 \Leftrightarrow Y \independent X_j |X_{-j}$, the MinShap $VI_{n,j}^{(1)}$ estimated with sufficiently large $K = O(|S^*|)$ ($|S^*|$ is the size of optimal subset/ the number of significant features) permutations is valid, ie: $\mathbb{P}(VI_{n,j}^{(1)} >t_j|H_0) < \alpha$, for all $\alpha \in [0,1]$ with threshold $t_j>\sqrt{-2\ln(\alpha)\sigma_{n,j}^{2,(1)}}$.
\end{theorem}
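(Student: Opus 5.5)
The plan has three steps: reduce the population null to a statement about the sampled permutations, establish a Gaussian limit for one estimated summand, and then pass from that summand to the minimum over the $K$ sampled permutations.

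\textbf{Step 1: reduce the null to the sampled permutations.} Under Assumptions~\ref{ass:faithful} and~\ref{ass:revser_cau}, Lemma~\ref{lemma} gives null monotonicity. Theorem~\ref{thm:ordering} then makes the null $H_0:\min_{\pi} VI_j^\pi=0$ equivalent to $Y\independent X_j\mid X_{-j}$. Null monotonicity also gives the following: once an ordering $\pi$ contains all of $S^*$ among its predecessors $[\pi]_{j-1}$, we have $VI_j^\pi=0$ under $H_0$. This holds because $Y\independent X_j\mid X_{S^*}$ by predictive sufficiency (Lemma~\ref{lem:predictive-sufficiency}) together with the faithfulness/Markov boundary property, and it then propagates to every superset. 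The role of $K=O(|S^*|)$ is to guarantee that, with high probability, at least one of the $K$ sampled permutations $\pi_{k_0}$ is such a \emph{null} ordering. I would make this precise with a coupon/geometric argument. For a uniformly random permutation, the probability that $j$ comes after all of $S^*$ is $1/(|S^*|+1)$. The probability that none of $K$ i.i.d. draws works is therefore $(1-1/(|S^*|+1))^K\le e^{-K/(|S^*|+1)}$, which is below any prescribed $\delta$ once $K\ge (|S^*|+1)\log(1/\delta)$.

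\textbf{Step 2: Gaussian limit for a single null summand.} For a null ordering, $VI_{n,j}^{\pi_{k_0}}$ is a difference of empirical risks, $\frac1n\sum_i(\epsilon^2_{[\pi]_{j-1}\cup\{j\},i}-\epsilon^2_{[\pi]_{j-1},i})$, computed on held-out data. Its mean is $0$ under $H_0$. I would invoke an asymptotic-linearity/CLT argument in the style of Williamson et al.: cross-fitting, with nuisance estimators converging fast enough that the remainder is $o_P(n^{-1/2})$. This gives $VI^{\pi}_{n,j}\approx N(0,\sigma_j^{2,\pi})$ up to the $\sqrt n$ scaling absorbed into $\sigma_{n,j}$. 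The Gaussian tail bound $\mathbb P(Z>t)\le e^{-t^2/(2\sigma^2)}$ then yields $\mathbb P(VI_{n,j}^{\pi_{k_0}}>t)\le \alpha$ whenever $t\ge\sqrt{-2\ln(\alpha)\sigma_j^{2,\pi_{k_0}}}$.

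\textbf{Step 3: pass to the minimum and make the threshold data-driven.} Two facts combine here. First, $\{VI_{n,j}^{(1)}>t\}\subseteq\{VI_{n,j}^{\pi_{k_0}}>t\}$. Second, I must handle the fact that the threshold uses $\sigma^{2,(1)}_{n,j}$, the smallest estimated variance, rather than $\sigma^{2,\pi_{k_0}}_{n,j}$.

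This second point is the main obstacle: the smallest variance can be strictly smaller than that of the null ordering, so the naive argument fails. My plan is to exploit the structure of null orderings. Under $H_0$ with squared loss, every null ordering has $\epsilon_{[\pi]_{j-1}\cup\{j\}}=\epsilon_{[\pi]_{j-1}}$ at the population level, so its variance $\sigma_j^{2,\pi}$ degenerates to the second-order fluctuation. I would then argue that the ordering attaining the minimum statistic is, with high probability, a null one, because non-null orderings have $VI_j^\pi>0$ and are separated from $0$ by $\sqrt n$ concentration. Under this argument the two order statistics coincide asymptotically, and consistency of $\sigma^2_{n,j}$ lets me replace the population variance by $\sigma^{2,(1)}_{n,j}$ with a vanishing error. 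The strict inequality $t_j>\sqrt{\cdot}$ is what absorbs this error together with the $\delta$ from Step 1.

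If this matching cannot be justified in full generality, I would fall back on a union bound over all null orderings, each controlled by the Gaussian tail with its own variance. I would state the result as asymptotic validity, with the bound holding for $n$ large.
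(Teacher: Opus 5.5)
\textbf{Step 1 and where the real difficulty lies.} Your Step~1 is the paper's argument. The paper works with the minimal null sets $B_i$ and uses $\min_i|B_i|\le|S^*|$, reaching the same bound $1-(1-1/(|S^*|+1))^K$. Your handling of $\delta$ is more careful than the paper's: the paper writes $\mathbb P(VI^{(1)}_{n,j}>t_j\mid H_0)=\mathbb P(VI^{(1)}_{n,j}>t_j\mid H_0,\widetilde A)\,\mathbb P(\widetilde A)$ and silently drops the $\mathbb P(\widetilde A^c)$ term.

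The real issue is the tail step, which the paper does not derive either. It applies a Chernoff bound to $VI^{(1)}_{n,j}$ on $\widetilde A$ with variance proxy $\sigma^{2,(1)}_{n,j}$. That is, it presupposes that the minimum is centered sub-Gaussian with the smallest estimated variance.

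\textbf{Gap in Step 2.} Your attempt to derive this breaks already here. As you note in Step~3, on a null ordering $\epsilon_{[\pi]_{j-1}\cup\{j\}}=\epsilon_{[\pi]_{j-1}}$. Hence $\sigma^{2,\pi}_j=0$, and Theorem~\ref{thm:VI} only gives $\sqrt n\,VI^{\pi}_{n,j}\to 0$ in probability. That says nothing at the scale of the threshold, $t_j\propto(\sigma^{2,(1)}_{n,j})^{1/2}\le(\sigma^{2,\pi_{k_0}}_{n,j})^{1/2}=o_P(n^{-1/2})$.

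At that scale, both $VI^{\pi}_{n,j}$ and $\sigma^{2,\pi}_{n,j}$ are driven by the nuisance error $\hat f_{[\pi]_{j-1}\cup\{j\}}-\hat f_{[\pi]_{j-1}}$. Their standardized ratio need not be approximately $N(0,1)$. Moreover, $VI^{\pi}_{n,j}$ need not even be centered: its mean is a difference of the two fits' squared estimation errors, typically of the same order as or larger than $(\sigma^{2,\pi}_{n,j})^{1/2}$. Consistency of $\sigma^2_{n,j}$ is an absolute statement, so it gives no control of the standardized statistic when the target variance is $0$. Williamson et al.\ avoid exactly this degeneracy by sample splitting; you would need that, or an explicit tail assumption.

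\textbf{Gap in Step 3.} Step~3 has a second, independent gap. Grant a valid Gaussian-type tail bound for each fixed null ordering with its own variance. The threshold still uses $\sigma^{2,(1)}_{n,j}=\min_k\sigma^{2,\pi_k}_{n,j}$, which in general belongs neither to $\pi_{k_0}$ nor to the ordering attaining $VI^{(1)}_{n,j}$. Showing that the minimizing statistic comes from a null ordering does not make that ordering the one of smallest variance once several null orderings are sampled. So your claim that ``the two order statistics coincide asymptotically'' is unsupported.

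Applying $\pi_{k_0}$'s bound at the boundary value $\sqrt{-2\ln(\alpha)\,\sigma^{2,(1)}_{n,j}}$ yields only $\alpha^{\sigma^{2,(1)}_{n,j}/\sigma^{2,\pi_{k_0}}_{n,j}}\ge\alpha$. Your union-bound fallback goes through the null ordering that attains $\sigma^{2,(1)}_{n,j}$, which must itself first be shown to be null. Because that ordering is selected by the data, the bound becomes $m\alpha$, with $m\le K$ the number of sampled null orderings. This proves validity only for an inflated threshold such as $\sqrt{2\ln(K/\alpha)\,\sigma^{2,(1)}_{n,j}}$, which is a different statement.

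To obtain the theorem with the stated threshold, you must adopt as a hypothesis what the paper's Chernoff step takes for granted. Namely, on $\widetilde A$ the statistic $VI^{(1)}_{n,j}$ must be centered sub-Gaussian with variance proxy $\sigma^{2,(1)}_{n,j}$. The proposal as written does not supply this.
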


\begin{remark}
The MinShap algorithm works well in filtering out the insignificant feature $j$ when $|S^*|$, the number of significant features, is relatively small. Consider the worst case when feature $j$ carries partial information of all significant features, the chances of selecting the correct true permutation order where $j$ is placed after all significant features will be high if $|S^*|$ is small.
\end{remark}
\begin{remark}
The Type II error is driven by the number of permutation $K$ due to the independent multiple hypothesis testing procedure. Thus, fewer permutation $K$ maximizes the power of MinShap. For further details, see Appendix \ref{app:type2}.
\end{remark}

Based on the threshold derived from Theorem~\ref{main_thm}, we state Algorithm \ref{alg:minshap}, which implements MinShap for statistical feature selection.
\begin{algorithm}[h]
\caption{MinShap algorithm for feature selection}
\label{alg:minshap}
\begin{algorithmic}[1]
\Require Dataset $(X,Y)$; predictiveness measure $V$; number of features $p$; number of permutations $K$; significance level $\alpha$

\State Fit null model $f_{n,\emptyset}$; set $V_{\emptyset}\leftarrow V(f_{n,\emptyset})$ and residuals $\mathbf e^2_{\emptyset}$
\State Initialize storage arrays $\{\phi^{\pi_k}_{n,j}\}$ and $\{\sigma^{2,\pi_k}_{n,j}\}$ for all $j\in[p],\,k\in[K]$

\For{$k=1,\ldots,K$}
  \State Sample a random permutation $\pi_k$
  \State Set $V_{\mathrm{cur}} \leftarrow V_{\emptyset}$ and $\mathbf e^2_{\mathrm{cur}} \leftarrow \mathbf e^2_{\emptyset}$
  \For {$j=1,\ldots,p$}
    \State Set predecessor set $\mathcal P^{\pi_k}_{j-1} \leftarrow \{\, i\in[p]: \pi_k(i)<\pi_k(j)\,\}$
    \State Fit model $f_{n,\mathcal P^{\pi_k}_{j-1}\cup\{j\}}$ and compute $V_{\mathrm{new}} \leftarrow V(f_{n,\mathcal P^{\pi_k}_{j-1}\cup\{j\}})$ and residuals $\mathbf e^2_{\mathrm{new}}$
    \State $VI^{\pi_k}_{n,j} \leftarrow V_{\mathrm{new}}-V_{\mathrm{cur}}$
    \State $\phi^{\pi_k}_{n,j}\leftarrow VI^{\pi_k}_{n,j}$\; ; \; $\sigma^{2,\pi_k}_{n,j}\leftarrow Var(\mathbf e^2_{\mathrm{new}}-\mathbf e^2_{\mathrm{cur}})/n$
    \State $V_{\mathrm{cur}}\leftarrow V_{\mathrm{new}}$\; ; \;$\mathbf e^2_{\mathrm{cur}} \leftarrow \mathbf e^2_{\mathrm{new}}$
  \EndFor
\EndFor

\For{$j=1,\ldots,p$}
  \State $\phi^{(\min)}_{n,j} \leftarrow \min_{k\in[K]} \phi^{\pi_k}_{n,j}$
  \State $t_{j} \leftarrow \sqrt{-2\log(\alpha)\cdot \sigma^{2,(1)}_{n,j}}$
  \State Reject $H_{0,j}$ if $\phi^{(\min)}_{n,j}\ge t_j$
\EndFor

\State \textbf{Return} $\phi^{(\min)}_{n,j}$, $t_{j}$, and decisions for all $j\in[p]$
\end{algorithmic}
\end{algorithm}

\subsection{Computational complexity}
The computation complexity of MinShap is the same as permutation-based Shapley value. The permutation approach is preferred over the subset approach as it reuses the fitted model, reducing the computational time to half of the subset approach. There are a number of ways to lower the computation complexity of computing MinShap. First, for different feature $j$ and different permutation $\pi \in \Pi$, the computational jobs can be run in parallel and aggregate results at the end. In this way, the computation complexity can be reduced to $O(p)$ at the price of higher storage space. Second, the number of permutation order set we used to approximate Shapley value can be proportional to the number of significant features $|S^*|$ not the number of features $p$, as we discussed in Theorem \ref{main_thm}. The classic Shapley value approximation via Monte Carlo sampling ~\cite{castro_polynomial_2009, rozemberczki_shapley_2022} requires $O(p^2)$ in time complexity where we have two nested loops: outer over permutations $p$, and 
inner over features in the permutation $p$. Now, the time complexity can be reduced to $O(|S^*|p)$. Third, to save time from retraining models, we can either use \textit{Dropout} method where the $j^{th}$ feature is replaced by its corresponding marginal mean and then reinserted into the pre-trained full model $f_0$, or the Lazy-training~\cite{gao_lazy_2022} by gradient computation in neural network set up to estimate reduced model parameter. 

\subsection{Extension to partial conjunction hypothesis testing}
\label{Sec_pvalue}

MinShap can naturally be connected to the maximum p-value in multiple conditional independence tests. The construction of p-value test for feature importance is derived from the framework of Williamson et al.\cite{williamson_general_2023} that the estimator $V(f_n(X))$ of value function $V(f(X))$ is asymptotically normal and efficient if several standard conditions related to smoothness, identifiability, and finite moments hold. Thus, the result can immediately be applied to variable importance estimator $VI_{n,j}^{\pi} $ by continuous mapping theorem.

 \begin{theorem}[Re-formulated from~\cite{williamson_general_2023}]\label{thm:VI}
  If assumptions in~\cite{williamson_general_2023} hold, then each estimator of Shapley value summand $VI_{n,j}^{\pi} $ for $j \in [p]$ under permutation order $\pi$, has the following result:\[\sqrt{n}\big[VI_{n,j}^{\pi} - VI_{j}^{\pi}\big] \overset{d}{\rightarrow} N(0, \sigma^{2,\pi}_{j}). \]  
\end{theorem}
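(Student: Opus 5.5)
Since the statement is explicitly a re-formulation of~\cite{williamson_general_2023}, the plan is to reduce it to their asymptotic linearity result for plug-in estimators of predictiveness functionals and then use a difference argument. Fix a permutation $\pi$ and $j$, and write $A=[\pi]_{j-1}$ and $B=A\cup\{j\}$, so that
\[
VI_{n,j}^{\pi}=V(f_{n,B})-V(f_{n,A}),\qquad VI_j^\pi=V(f_B)-V(f_A).
\]
Under their conditions, each of $V(f_{n,A})$ and $V(f_{n,B})$ is an asymptotically linear estimator of its target. These conditions are: a smooth (Gâteaux/Fréchet differentiable) predictiveness functional, a second-order remainder that is $o_P(n^{-1/2})$ given the nuisance rate $\|f_n-f\|=o_P(n^{-1/4})$, a Donsker condition or cross-fitting, and finite second moments. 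Concretely,
\[
V(f_{n,S})-V(f_S)=\frac1n\sum_{i=1}^n \dot V_S(Z_i)+o_P(n^{-1/2}),\qquad S\in\{A,B\},
\]
where $Z_i=(X^{(i)},Y^{(i)})$ and $\dot V_S$ is the mean-zero influence function. For the squared-error value function used in the paper, $\dot V_S(Z)=-(\epsilon_S^2-\mathbb E\epsilon_S^2)$ with $\epsilon_S=Y-f(X_S)$.

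Subtracting the two expansions, which are computed on the same sample, gives
\[
\sqrt n\,[VI_{n,j}^\pi-VI_j^\pi]=\frac1{\sqrt n}\sum_{i=1}^n\bigl(\dot V_B(Z_i)-\dot V_A(Z_i)\bigr)+o_P(1).
\]
The classical CLT applied to the i.i.d.\ mean-zero summands then yields a normal limit with variance $\mathrm{Var}(\dot V_B-\dot V_A)=\mathrm{Var}(\epsilon_B^2-\epsilon_A^2)=\sigma_j^{2,\pi}$, which matches the paper's definition. Slutsky's lemma absorbs the remainder. Equivalently, one can invoke the joint bivariate normality of $(V(f_{n,A}),V(f_{n,B}))$ and apply the continuous mapping theorem to the map $(u,v)\mapsto v-u$, which is the phrasing the text uses just before the statement.

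The main obstacle is the null case $VI_j^\pi=0$. There $f_B=f_A$, so $\epsilon_B=\epsilon_A$ and $\sigma_j^{2,\pi}=0$, and the limit degenerates to a point mass. I will state the result with the convention that $N(0,0)=\delta_0$, which keeps it literally true: the $\sqrt n$-scaled error is then $o_P(1)$. I will also note, as Williamson et al.\ do, that nondegenerate inference at the null requires sample splitting, with $V(f_{n,A})$ and $V(f_{n,B})$ estimated on independent folds. Under splitting the cross term vanishes, and the variance becomes the sum of the two fold variances rather than $\sigma_j^{2,\pi}$. A second technical point is verifying the $o_P(n^{-1/2})$ remainder uniformly for both nested fits. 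For that step I will simply assume the nuisance-rate conditions of~\cite{williamson_general_2023} for each of the finitely many subsets involved.
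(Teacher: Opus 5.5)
Your proposal is correct and takes the same route as the paper, which simply invokes the asymptotic normality of $V(f_n)$ from Williamson et al.\ and passes to the difference $VI_{n,j}^{\pi}$ by the continuous mapping theorem. Your same-sample asymptotic-linearity expansion is the rigorous version of that step: it gives the joint convergence of $(V(f_{n,A}),V(f_{n,B}))$ that continuous mapping needs and identifies the variance as $\mathrm{Var}(\epsilon_B^2-\epsilon_A^2)=\sigma_j^{2,\pi}$; your caveat that the limit degenerates to $\delta_0$ when $VI_j^{\pi}=0$ is accurate, and the paper does not address it.
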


With Theorem \ref{thm:VI}, for each $j$, we can get a p-value for each permutation order $\pi$ with the test statistic $T^\pi_{j} = VI_{j}^\pi/\sigma^\pi_{j}$ under the hypothesis that $H_{0_{\pi}}: VI_{j}^{\pi}=0$ where $\pi\in \Pi(p)$, and we have \[p_{j}^{(max)} := \max_{\pi \in \Pi(p)} p_{j}^\pi <\alpha \Leftrightarrow Y \notindependent X_j | X_{-j}.\] Denote $p_{n,j}^{\pi_k}$ as estimated p-value for $p_{j}^\pi$ of feature $j$ under permutation $\pi_k \in \Pi$ with null hypothesis $H_{0_k}$, and $
p_{n,j}^{(1)} \leq ... \leq p_{n,j}^{(K)}
$
as the order statistics of $\{p^{\pi_k}_{n,j}: k=1,...,K\}$. From a multiple testing perspective, each permutation/ordering generates a feature importance statistic and a corresponding 
p-value for feature $j$; we then aggregate these p-values to make the best decision. MinShap modifies the Shapley value by taking the minimum marginal contribution across orderings, which targets the direct effect. The link between them is that MinShap operates on the same collection of ordering statistics, which we map to p-values via a normal approximation and perform multiple-testing procedures. Both MinShap and Max-p test the following hypothesis:

\[H_0: X_j \independent Y|X_{-j} \iff \exists\,k\in[K]\ \text{s.t.}\ VI_{j}^{\pi_k}=0
\iff \bigcup_{k=1}^K \{VI_{j}^{\pi_k}=0\};\]
\vspace{-0.1cm}
\[H_a: X_j \notindependent Y|X_{-j} \iff \forall\,k\in[K],\ VI_{j}^{\pi_k}\neq 0\iff \bigcap_{k=1}^K \{VI_{j}^{\pi_k}\neq 0\}.\]

\begin{corollary}\label{maxp_cor}
     If assumptions \ref{ass:faithful} and \ref{ass:revser_cau} hold in DAG modeling, then under the null hypothesis $H_0:X_j \independent Y|X_{-j}  $, the Max-p value $p_{n,j}^{(K)}$ estimated with sufficiently large $K = O(|S^*|)$ permutations is valid, ie: $\mathbb{P}(p_{n,j}^{(K)} < \alpha|H_0) < \alpha$, for all $\alpha \in [0,1]$.
\end{corollary}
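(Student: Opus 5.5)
The plan is to prove Corollary~\ref{maxp_cor} as the $p$-value analogue of Theorem~\ref{main_thm}. Write $p_{n,j}^{\pi_k}$ for the one-sided $p$-value of the statistic $\sqrt{n}\,VI_{n,j}^{\pi_k}/\sigma_{n,j}^{\pi_k}$. The key structural input is Lemma~\ref{lemma} together with Theorem~\ref{thm:ordering}. Under Assumptions~\ref{ass:faithful} and~\ref{ass:revser_cau}, null monotonicity holds. So under $H_0: X_j\independent Y\mid X_{-j}$, every ordering $\pi$ that places $j$ after all features in $S^*$ has $VI_j^{\pi}=0$. The reason is that $Y\independent X_j\mid X_{S^*}$, and monotonicity propagates this to any superset of $S^*$ not containing $j$. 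Call such orderings \emph{null orderings} for $j$.

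\textbf{Step 1 (a null ordering is sampled).} For a uniformly random permutation, the probability that $j$ falls after all $|S^*|$ elements of $S^*$ is $1/(|S^*|+1)$. Hence the probability that none of $K$ independent draws is a null ordering is $(1-1/(|S^*|+1))^K\le e^{-K/(|S^*|+1)}$. Choosing $K=c(|S^*|+1)$, with $c$ large depending on the slack $\delta$, makes this event $E^c$ have probability at most $\delta$. This is where $K=O(|S^*|)$ enters.

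\textbf{Step 2 (validity on $E$).} On $E$, fix a sampled null ordering $\pi_{k_0}$. Then $p_{n,j}^{(K)}=\max_k p_{n,j}^{\pi_k}\ge p_{n,j}^{\pi_{k_0}}$, so
\[
\mathbb P\bigl(p_{n,j}^{(K)}<\alpha\mid H_0\bigr)\le \mathbb P\bigl(p_{n,j}^{\pi_{k_0}}<\alpha\mid H_0,\,E\bigr)+\mathbb P(E^c).
\]
By Theorem~\ref{thm:VI}, $\sqrt n\,VI_{n,j}^{\pi_{k_0}}\overset{d}{\to}N(0,\sigma_j^{2,\pi_{k_0}})$ since $VI_j^{\pi_{k_0}}=0$. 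With a consistent variance estimator and Slutsky, $p_{n,j}^{\pi_{k_0}}$ is asymptotically $\mathrm{Unif}(0,1)$, so the first term is at most $\alpha$ asymptotically. Conditioning on which permutation is null needs care. Because permutations are sampled independently of the data, I will condition on the realized permutations first and then apply the asymptotics to each fixed $\pi$.

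\textbf{Main obstacle.} The difficulty is squaring the asymptotic, slack-laden bound $\alpha+\delta+o(1)$ with the claimed strict bound $<\alpha$. The plan is to exploit the fact that among null orderings, the max over several of them (typically several) is stochastically larger than a single uniform. More simply, I would absorb $\delta$ by running the test at level $\alpha-\delta$, or by letting $K$ grow so that $\delta\to 0$, and state validity in the asymptotic sense $\limsup_n \mathbb P(\cdot)\le\alpha$.

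A second subtlety is the boundary case $\sigma_j^{2,\pi}=0$. This is the degenerate case when $f_{S\cup\{j\}}=f_S$ exactly, and I would handle it by noting that $VI_{n,j}^{\pi}$ is then $o_p(n^{-1/2})$, so the $p$-value does not concentrate near $0$. This case is excluded by the regularity conditions of~\cite{williamson_general_2023}, which I would simply invoke.
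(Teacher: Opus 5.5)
Your argument is essentially the paper's. The appendix proof notes that $p_{n,j}^{(K)}<\alpha$ forces every $|Z_{n,j}^{(k)}|$ above $t=\Phi^{-1}(1-\alpha/2)$, in particular the one from a sampled true-null ordering, whose rejection probability is $\alpha$. Your Step~1 and the $\alpha+\delta+o(1)$ bookkeeping are more careful than the paper. The paper simply asserts that some sampled ordering is null, implicitly reusing the $K=O(|S^*|)$ argument from Theorem~\ref{main_thm}, and writes the last step as an exact equality. Stating the conclusion as asymptotic validity with $\delta$ absorbed is the defensible reading of the corollary, which cannot hold literally for all $\alpha\in[0,1]$ (take $\alpha=0$).

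One step of your plan fails as written: the step you take from Theorem~\ref{thm:VI}. The case $\sigma_j^{2,\pi}=0$ is not a boundary case excluded by the regularity conditions of~\cite{williamson_general_2023}; it is the situation of every null ordering. In the setting of Lemma~\ref{lem:predictive-sufficiency}, with squared loss as in the definition of $\sigma_j^{2,\pi}$, $VI_j^{\pi}=0$ forces $f_{[\pi]_{j-1}\cup\{j\}}=f_{[\pi]_{j-1}}$ almost surely. Hence $\epsilon_{[\pi]_{j-1}\cup\{j\}}=\epsilon_{[\pi]_{j-1}}$ and $\sigma_j^{2,\pi}=0$. If that case were excluded, Step~2 would have no ordering to apply to.

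With zero variance, Theorem~\ref{thm:VI} only gives $\sqrt n\,VI_{n,j}^{\pi_{k_0}}\to 0$ in probability, and Slutsky yields nothing. The studentized statistic is a ratio whose numerator and denominator both vanish at the $\sqrt n$ scale. Knowing $VI_{n,j}^{\pi_{k_0}}=o_p(n^{-1/2})$ says nothing about the law of $VI_{n,j}^{\pi_{k_0}}/\sigma_{n,j}^{\pi_{k_0}}$. That law is governed by higher-order behaviour of the fitted learners and need not be close to $N(0,1)$, in either direction.

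To close this you need a non-degenerate null limit. One option is the sample-splitting device of~\cite{williamson_general_2023}: estimate the two predictiveness values on independent halves of the data. Another is a direct argument that the null-ordering $p$-value is asymptotically super-uniform. The paper's proof makes the same unstated assumption when it writes $\mathbb P(|Z_{n,j}^{(k)}|>t)=\alpha$, so this is a gap you share with the paper rather than a departure from it.
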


The corresponding proof is available in Appendix \ref{app:cor}. Note that, in contrast to common multiple hypothesis testing - where the global null is \(H_0:\bigcap_{k=1}^{K}\{VI_{j}^{\pi_k}=0\}\) and is rejected if at least one \(VI_{j}^{(k)}\neq 0\) - our conjunction null is \(H_0:\bigcup_{k=1}^{K}\{VI_{j}^{\pi_k}=0\}\), which is rejected if \(VI_{j}^{\pi_k}\neq 0\) for all \(k\in[K]\).

Since a single p-value may be highly significant while all other p-values show no evidence for the alternatives, Max-p  might be conservative for testing. Further, the faithfulness assumption tends to often fail in the finite sample setting due to near cancellations~\cite{uhler_geometry_2013} which means we may need to allow some error.
Benjamini and Heller~\cite{benjamini_screening_2008} proposed partial conjunction hypothesis testing (PCHT), which is less stringent that at least $u$ out of $K$ tested hypotheses are false. When $u=1$, this is well-known global null hypothesis testing problem where we reject the null if there is at least one false null. Our Max-p value corresponds to the case when $u=K$, where we reject the null if all are false null. To define PCHT, let's consider null hypothesis $H_{0_1},...,H_{0_K}$ with corresponding p-value $p_{1}, p_2...,p_K$, then \[H_0^{u/K}: \text{at least $K-u+1$ nulls are true }\]\[ H_1^{u/K}: \text{at least $u$ out of $K$ are non-null}\]
where $u$ is a data–driven value selected on a tuning set on a range of $u$'s with the minimum loss/highest performance. To obtain a p-value for PCHT, we can choose any one of three adjusted p-value as proposed by Benjamini and Heller~\cite{benjamini_screening_2008} to provide a valid p-value, $p^{u/K}$, for $H_0^{u/K}$:
\begin{enumerate}
    \item General dependency p-value (Bonferroni):\\
    $p_{n,j}^{u/K} = (K-u+1)p_{n,j}^{(u)}$;
    \item Independent p-value (Stouffer):\\
    $p_{n,j}^{u/K} =2*( 1-\Phi(\frac{\sum_{k=1}^{K-u+1}z^{(k)}}{\sqrt{K-u+1}}))$;
    \item Independent p-value (Fisher):\\
    $p_{n,j}^{u/K} = \mathbb{P}(\chi^2_{2(K-u+1)}\geq -2\sum_{k=u}^{K}\log p_{n,j}^{(k)})$.
\end{enumerate}
Then apply the Holm adjustment to get $p_{n,j}^{*u/K}$ where $p_{n,j}^{*1/K} = p_{n,j}^{(1)},  
p_{n,j}^{*u/K} = \max\{p_{n,j}^{*\frac{u-1}{K}},\, p_{n,j}^{u/K}\},  u=2,...,K$. We have:
\begin{theorem}[Re-formulated from \cite{benjamini_screening_2008}]\label{thm:pcht}
 $p_{n,j}^{*u/K}$ is a valid p-value for the partial conjunction null hypothesis $H_0^{u/K}$, ie: $P(p_{n,j}^{*u/K} < \alpha|H_0^{u/K}) < \alpha$, for all $\alpha \in [0,1]$.
\end{theorem}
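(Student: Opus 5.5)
The plan is to reproduce the Benjamini--Heller argument in two stages. First I will show that the unadjusted combined p-value $p_{n,j}^{u/K}$ is valid under $H_0^{u/K}$. Then I will show that the Holm-type monotonization $p_{n,j}^{*u/K}=\max_{v\le u}p_{n,j}^{v/K}$ preserves validity. Throughout I take the individual p-values $p_{n,j}^{\pi_k}$ to be (asymptotically) valid for their own nulls $H_{0_k}:VI_j^{\pi_k}=0$. This follows from Theorem~\ref{thm:VI}: the normal approximation gives $\mathbb{P}(p_{n,j}^{\pi_k}<\alpha\mid H_{0_k})\le\alpha$, at least in the limit. I also assume the exact p-values are super-uniform. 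All statements are then understood either exactly or asymptotically, matching the paper's convention.

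For the first stage, fix a configuration under $H_0^{u/K}$ and let $\mathcal T\subseteq[K]$ be the set of true nulls, so $|\mathcal T|\ge K-u+1$. The key combinatorial observation is that the $u$-th smallest of all $K$ p-values is at least the smallest p-value among the true nulls. More generally, the largest $K-u+1$ order statistics $p^{(u)},\dots,p^{(K)}$ dominate, coordinatewise after sorting, some $K-u+1$ true-null p-values. This holds because at most $u-1$ p-values belong to false nulls.

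For the Bonferroni version this gives
\[
\mathbb{P}\bigl((K-u+1)p^{(u)}<\alpha\bigr)\le\mathbb{P}\Bigl(\min_{k\in\mathcal T'}p^{\pi_k}<\tfrac{\alpha}{K-u+1}\Bigr)\le\alpha,
\]
where $\mathcal T'\subseteq\mathcal T$ has size $K-u+1$, and the last inequality is the union bound. For Fisher and Stouffer, the test statistics are monotone in each p-value. Replacing the $K-u+1$ largest p-values by the $K-u+1$ true-null ones can therefore only make the statistic more extreme. Independence plus super-uniformity then give the $\chi^2_{2(K-u+1)}$ and $N(0,1)$ reference laws, respectively, via stochastic dominance.

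For the second stage, I use the fact that the nulls are nested: $H_0^{u/K}\subseteq H_0^{v/K}$ for $v\le u$, since "at least $K-u+1$ true" implies "at least $K-v+1$ true". Hence under $H_0^{u/K}$,
\[
\{p_{n,j}^{*u/K}<\alpha\}\subseteq\{p_{n,j}^{u/K}<\alpha\},
\]
because $p^{*u/K}\ge p^{u/K}$. The probability of the right-hand event is at most $\alpha$ by the first stage.

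I expect the main obstacle to be the dominance step for Fisher and Stouffer. There one must argue carefully that the ordered largest $K-u+1$ values stochastically dominate a vector of $K-u+1$ independent super-uniforms. This holds even though the choice of which true nulls to match depends on the data. I would handle it by taking the fixed subset $\mathcal T'$ of true nulls and using the deterministic inequality that the sum of the top $K-u+1$ values of $-\log p$ is at most... — note the direction: the sum of $-2\log p$ over the largest $p$'s is at most that over any $K-u+1$ of the p-values. Taking the $K-u+1$ values from $\mathcal T'$ then gives a bound by an independent $\chi^2_{2(K-u+1)}$ variable. The Stouffer case is handled identically, with $z=\Phi^{-1}(1-p)$ in place of $-2\log p$.
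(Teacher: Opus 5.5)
Your proof is correct and is essentially the Benjamini--Heller argument that the paper relies on by citation; the paper supplies no proof of its own. For any fixed set $\mathcal T'$ of $K-u+1$ true nulls, $(p_{n,j}^{(u)},\dots,p_{n,j}^{(K)})$ dominates the sorted $\mathcal T'$ p-values coordinatewise by pure counting. Hence every coordinatewise-monotone combiner that is valid for $K-u+1$ p-values stays valid, and the Holm step only enlarges the p-value. Your nesting remark is backwards: $H_0^{v/K}\subseteq H_0^{u/K}$ for $v\le u$, not the reverse. This is harmless, because you only use $p_{n,j}^{*u/K}\ge p_{n,j}^{u/K}$.

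You have also, rightly, used the Benjamini--Heller conventions ($p_{n,j}^{*1/K}=p_{n,j}^{1/K}$ as in Algorithm~\ref{alg:maxp}, Stouffer through $\Phi^{-1}(1-p)$, and $\le\alpha$) instead of the literal text ($p_{n,j}^{*1/K}=p_{n,j}^{(1)}$, the two-sided $|z|$-based Stouffer of Algorithm~\ref{alg:maxp}, and strict $<\alpha$). This substitution is necessary, because the literal versions fail:
\begin{itemize}
\item $p_{n,j}^{(1)}$ has size $1-(1-\alpha)^K>\alpha$ under an independent global null.
\item The $|z|$-based Stouffer is anti-conservative, since $|z|$ is half-normal under the null; two true nulls already give rejection probability about $0.097$ at $\alpha=0.05$.
\item The strict inequality fails at $\alpha=0$.
\end{itemize}
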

We remark that Bonferroni is more robust since it is valid under arbitrary dependence among p-values, whereas Stouffer and Fisher require independence. If there is any underlying structure of the selected permutation order set, then Bonferroni method is recommended. In our case, permutation sampled i.i.d. with replacement yields independent p-values across repetitions, so we can apply any of the three adjustment methods. The Bonferroni method is more conservative since it looks only at a single order statistic: the $(K-u+1)^{th}$ largest p-value, whereas Fisher and Stouffer methods aggregate all evidence across the tail of largest p-values. The procedure of computing adjusted p-values does not require any extra model retraining, so the computation complexity is the same as MinShap and Max-p. When screening for different levels of $u$, one might worry about multiple testing, Holm adjustment to all above three methods are used to control overall false discovery rate (FDR)~\cite{benjamini_screening_2008}. The procedure of the computation for Max-p and adjusted p-value is summarized in Algorithm \ref{alg:maxp}. Fig.\ref{fig:flowchart} provides an
overview of our procedure in this paper.
\begin{figure}[tb]
    \centering
\includegraphics[width=0.8\textwidth]{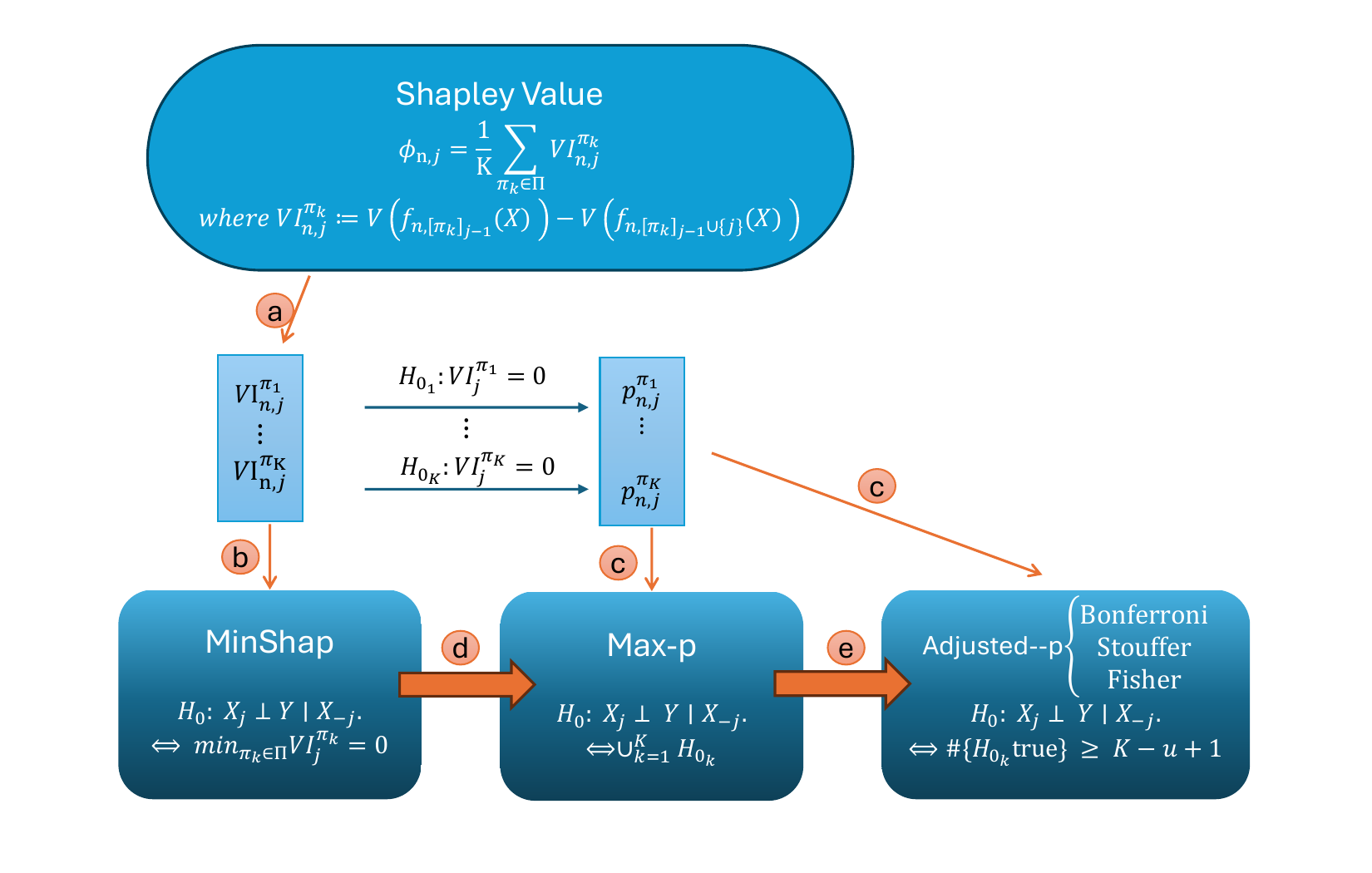}
    \caption{Overview of the MinShap algorithm and variations. \textcircled{a}: Shapley values are approximated by averaging marginal contributions/feature importance over $K$ randomly sampled permutations; \textcircled{b}: MinShap only tests the minimum contribution under faithfulness in DAGs; \textcircled{c}: Each feature importance test statistic yields a p-value test; \textcircled{d}: Connect MinShap to Max-p via multiple test perspective; \textcircled{e}: PCHT, a resolution for conservativeness of Max-p.}
    \label{fig:flowchart}
\end{figure}

\subsection{Practical guidelines for the choice of methods}
We now outline some practical guidance on which methods we could choose over others in different settings. When the Type II error is close to 0, MinShap and Max-p perform similarly to the adjusted p-value methods, and it is therefore sufficient to use MinShap and Max-p alone. A potential concern is the conservativeness of MinShap and Max-p, that is, the Type II error is not close to 0. In this case, one may instead apply the adjusted p-value methods, which can control the inflated Type II error without incurring a substantial additional computational cost.

In a data application, where the significant feature set is unknown, a practical strategy is to apply all methods initially. If their results are largely consistent, then MinShap and Max-p are preferred, since the choice 
$u=K$ in the adjusted p-value approach yields a procedure equivalent to them. If, however, the results differ substantially, this may indicate that the Type II error of MinShap and Max-p is non-negligible; in that case, the adjusted p-value methods can be applied.

\section{Empirical Evidence}\label{sec:sim}
In this section, we will present empirical evidence both through simulations and real data to validate our MinShap algorithm. We also compare them with other state-of-the-art methods such as GCM, LOCO and Lasso. We assess their accuracy and stability for the linear
model, non-linear non-additive model, conditional interaction model and logistic model. Specifically, we consider 20 covariates $X\sim \mathcal N(0,\Sigma_{20\times 20})$ with diagonal equals 1, and the following different settings are used:

\begin{enumerate}
    \item[(a)] Linear model:\\
    $Y \sim 4X_1 + 4X_2 + 3X_3*X_4 + 3X_5 + 2X_6 + 2X_5*X_6 + X_7 + X_8 +\epsilon$, where $ \text{Corr}(X_3,X_4) = 0.5$, $\epsilon \sim N (0, 1).$
            \item[(b)] Non-linear non-additive model  :\\
            $ Y \sim 2 \sin(X_1) + 2\log(|X_2| + 1) + X_1X_2 + 3 \cos(X_3+X_4) + \max(0, X_5) + X_6X_7X_8 +\epsilon$, where $ \Sigma \text{ is block diagonal matrix, and within-block correlations are }(0,0.2,0.5,0.8), \epsilon \sim N (0, 1)$;
        \item[(c)] Conditional interaction model :\\
        $Y = 1.5 X_1X_2 * I(X_3 > 0) +X_4 X_5 * I(X_3 < 0) +3X_6X_7 *I(X_8 > 0) +X_9X_{10} *I(X_8 < 0)+\epsilon$,  where $ \text{Corr}(X_1,X_2)=\text{Corr}(X_6,X_7)=0.9; \text{Corr}(X_4,X_5) = \text{Corr}(X_9,X_{10})=0.5, \epsilon \sim N (0, 1)$;
        \item[(d)] Logistic Model :\\
        $Y \sim f(2.5 X_1 +2.5X_2+2X_3X_4 +1.5X_5+1.5X_6+X^2_7+X^3_8)+\epsilon$, where $f$ is sigmoid function, $\text{Corr}(X_1,X_2) = 0.5, \epsilon \sim N (0, 0.1^2)$.
\end{enumerate}

\subsection{Feature selection comparison across methods}\label{sim:perf_comp}
 For all models, we use $3000$ samples over $100$ simulations. The number of permutations $K$ is $50$ for MinShap and Max-p value approaches. Our algorithms are model-agnostic: we evaluate them both using XGBoost and feed-forward neural networks for comparison and obtain consistent results. Specifically, we first mainly discuss results based on XGBoost, and for results in feed-forward neural networks, see Appendix \ref{app:sim_nn}. The overall performance comparison is presented 
in Fig.\ref{fig:all_perf_xgb} and the dashed line is the empirical threshold from random fair coin-flipping. From Fig.\ref{fig:all_perf_xgb}, we see that although Lasso requires the least training time, it only works relatively well in linear model and its accuracy are low in all other models. GCM tends to under-select significant features, resulting in low power, whereas LOCO tends to over-select insignificant features, leading to high Type I error. LOCO can be inefficient under strong feature dependence: correlated features can substitute for each other, inducing correlation bias~\cite{verdinelli_decorrelated_2024}. GCM alleviates the effect of feature correlation by considering the covariance of residuals from two regressions, but it suffers from a natural limitation: a covariance of zero does not necessarily imply independence. For example, the scenario occurs when feature $X_j$ follows a symmetric distribution. GCM and LOCO do not perform well when the model are not additive, as we can see that their corresponding F1 scores are low in Fig.\ref{fig:all_perf_xgb}. 
Across different models (a)-(d), the performance of MinShap and Max-p out-perform those three baseline methods with higher accuracy and F1 score, and maintain Type I error control. 

We assess stability of the selected features via the Jaccard index~\cite{mohana_survey_2016},  defined as the average pairwise similarity of selected feature sets:\[J_N
= \frac{2}{N(N-1)} \sum_{i=1}^{N-1} \sum_{j=i+1}^{N} J_{ij}.\]
 where $N$ is number of simulation, $J_{ij}\in [0,1]$ is the proportion of commonly selected features in two simulations out of the union of all features selected in two simulations. A higher Jaccard index indicates a higher overall similarity and lower dispersion among the selected feature sets, and thus the algorithm is more stable~\cite{mohammadi_robust_2016}. From Table~\ref{tb:jaccard_xgb}, we find that MinShap and Max-p are more stable than LOCO and GCM. While Lasso is stable, its accuracy is completely compromised in all nonlinear settings.
\begin{figure}[tb]
    \centering
    \includegraphics[width=0.8\textwidth]{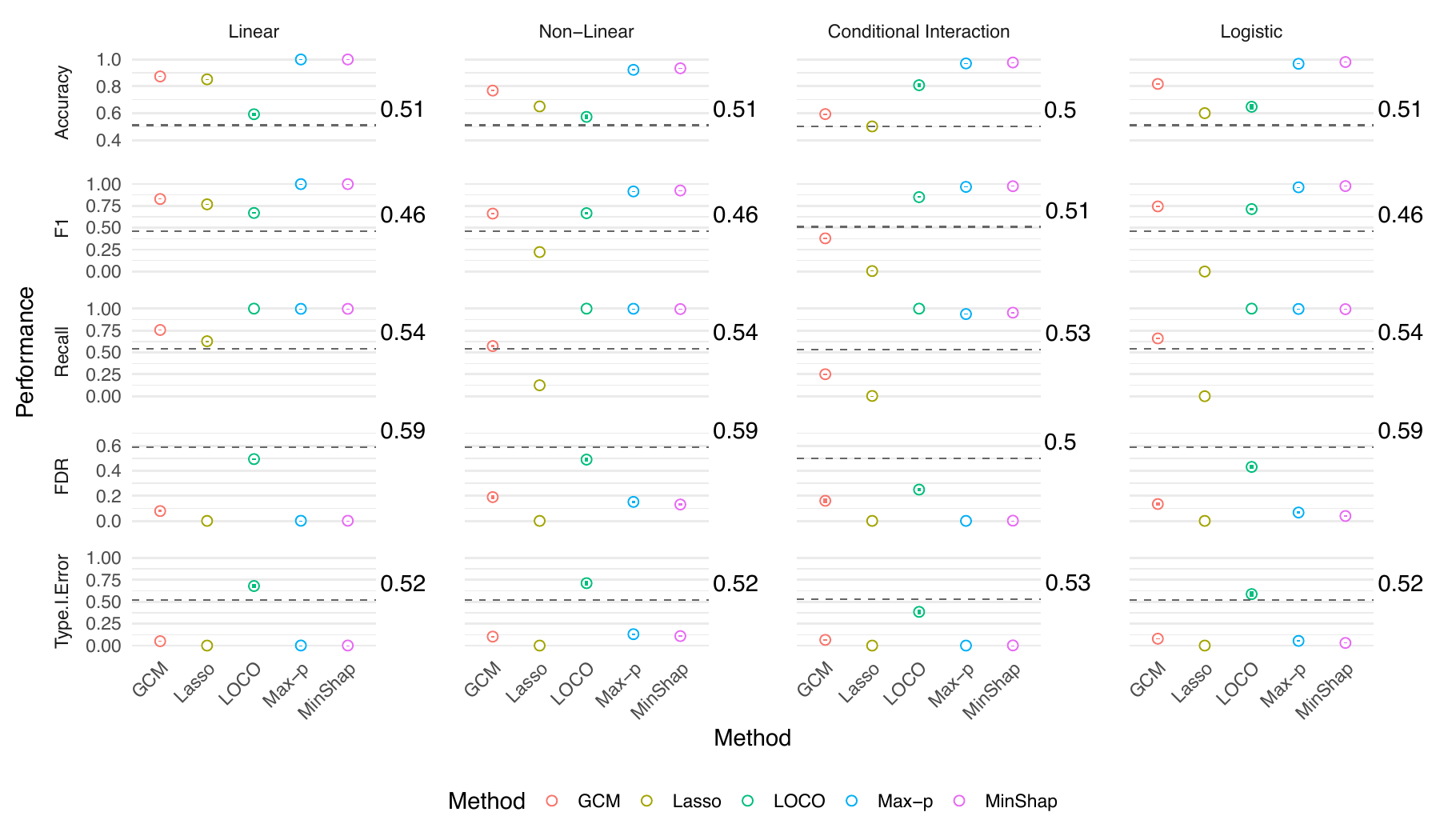}
    \caption{Average performance measure with standard deviation for model (a)-(d) for feature selection comparison using XGBoost. We see that MinShap and Max-p outperform other feature selection methods.}
    \label{fig:all_perf_xgb}
\end{figure}

\begin{table}[H]
\centering
\caption{Jaccard Index for the stability of feature selection methods on model (a)--(d) using XGBoost.}
\label{tb:jaccard_xgb}
    \begin{small}
\begin{tabular}{@{}lccccc@{}}
\toprule
& \multicolumn{5}{c}{Feature Selection Method} \\
\cmidrule(lr){2-6}
Model & MinShap & Max-p & LOCO & GCM & Lasso \\
\midrule
(a) & 0.98 & 0.99 & 0.71 & 0.84 & 0.82 \\
(b) & 0.78 & 0.77 & 0.74 & 0.51 & 1.00 \\
(c) & 0.91 & 0.89 & 0.71 & 0.54 & 0.94 \\
(d) & 0.98 & 0.87 & 0.67 & 0.72 & 1.00 \\
\bottomrule
\end{tabular}
\end{small}
 \vskip -0.1in 
\end{table}

\subsection{Comparison to stability selection}

One of the motivations of our MinShap and Max-p algorithms is that they increase stability. In this section, we will compare the performance in terms of various accuracy metrics as well as run-time with the stability selection approach developed in~\cite{meinshausen_stability_2010, shah_variable_2013}. Stability selection is a resampling-based feature selection method that repeatedly runs a base feature selection method on different sub-samples of the data and retains variables with high selection frequency across splits.
 Specifically, we will apply the stability selection to LOCO, GCM and Lasso to model (a)-(d) over 100 simulations with a $50\%$ sub-sampling rate and a selection threshold of $0.8$. Then, we compare their performance with MinShap and Max-p, which, as discussed in the previous section \ref{Sec_pvalue}, can be regarded as the multiple testing based feature selection method. The setup for MinShap and Max-p remain the same as previous section \ref{sim:perf_comp} with XGBoost method. Our results both shows how our method has similar performance to LOCO-stability but also significantly shorter run-time.
\begin{figure}[htbp]
    \centering
    \includegraphics[width=0.8\textwidth]{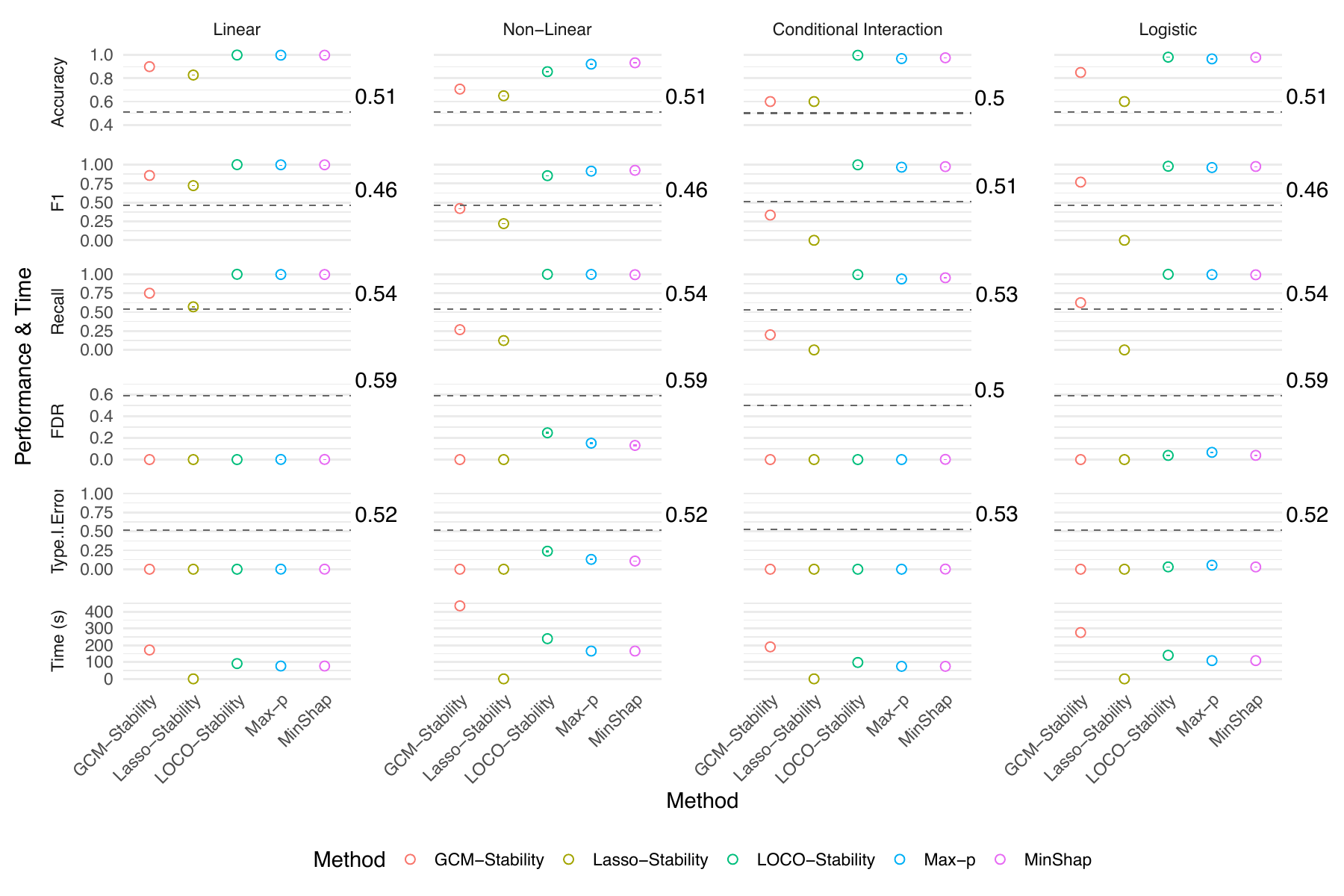}
    \caption{Average performance measure and time with standard deviation for model (a)-(d) for multiple testing based and stability selection based feature selection comparison using XGBoost. We see that MinShap and Max-p are more efficient than stability-based feature selection methods.}\label{fig:perf_xgb_with_stab}
\end{figure}

 From the Jaccard index in Table~\ref{tb:jaccard_xgb_with_stab}, we observe that the stability of our method and stability selection are similar especially for LOCO-stability. Fig.~\ref{fig:perf_xgb_with_stab} shows that after applying stability selection, the performance of Lasso and GCM remains roughly unchanged or is slightly improved in terms of accuracy and F1 score. The performance of LOCO improves significantly. Moreover, the Type I error and FDR are both well controlled for all three baseline feature selection methods after applying stability selection. However, despite the improvements brought by stability selection, our algorithms achieve similar performance in models (a), (c), and (d), and outperforms LOCO with stability selection in model (b) while always having significantly faster run-time. Therefore, MinShap and Max-p are more efficient.

\begin{table}[H]
\centering
\caption{Jaccard Index for the stability comparison of multiple testing based and stability selection based feature selection methods on model (a)--(d) using XGBoost.}
\label{tb:jaccard_xgb_with_stab}
    \begin{small}
\begin{tabular}{@{}lccccc@{}}
\toprule
& \multicolumn{5}{c}{Feature Selection Method} \\
\cmidrule(lr){2-6}
Model & MinShap & Max-p  & LOCO-Stability  & GCM-Stability  &Lasso-Stability \\
\midrule
(a) & 0.98 & 0.99 & 1.00  & 1.00  & 0.85\\
(b) & 0.78 & 0.77 & 0.72  & 0.92  &0.98\\
(c) & 0.91 & 0.89  & 0.99  & 1.00  &1.00\\
(d) & 0.98 & 0.87  & 0.93  & 1.00 &1.00 \\
\bottomrule
\end{tabular}
\end{small}
 \vskip -0.1in 
\end{table}

\subsection{Strength and limitation of MinShap/Max-p and the application of other adjusted p-value methods}
In this section, we will further discuss the strength and limitation of MinShap and Max-p, and provide practical guidance for when we use adjusted p-value methods. We will mainly focus on the first two models provided in section \ref{sec:sim}: linear and non-linear models. We will discuss the relationship among the number of permutation orders $K$, the proportion of significant features $|S^*|$, the sample size and further extend to high-dimensional case.
\begin{figure}[htbp]
    \centering
  \includegraphics[width=0.8\linewidth]{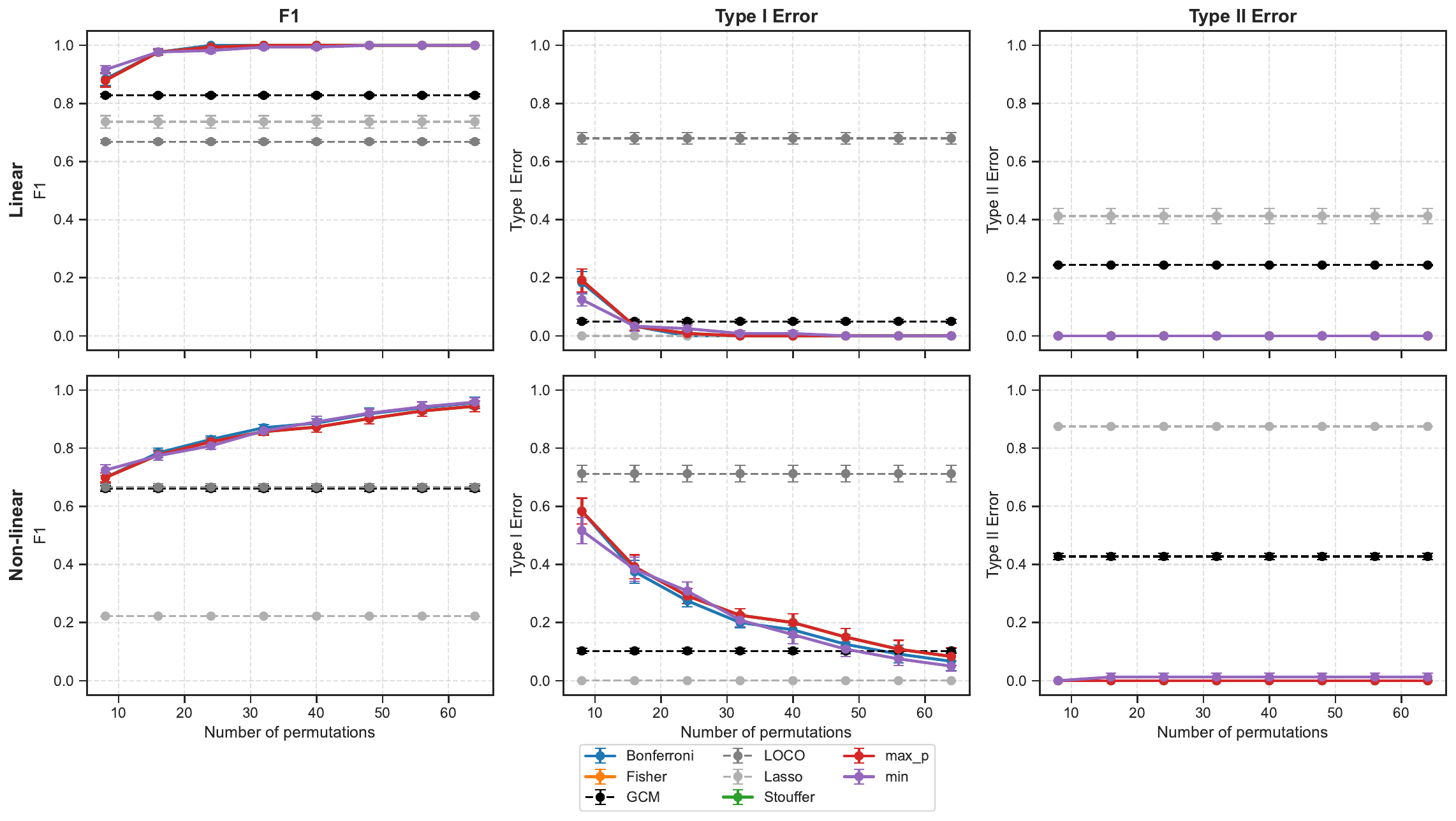}
    \caption{Average performance measure with standard deviation of different number of permutation for linear and non-linear model with sample size 3000. We see that when Type II error is nearly 0, more permutations drive higher performance of MinShap and Max-p.}
    \label{fig:perm_compare}
\end{figure}
\begin{figure}[htbp!]
    \centering
  \includegraphics[width=0.8\linewidth]{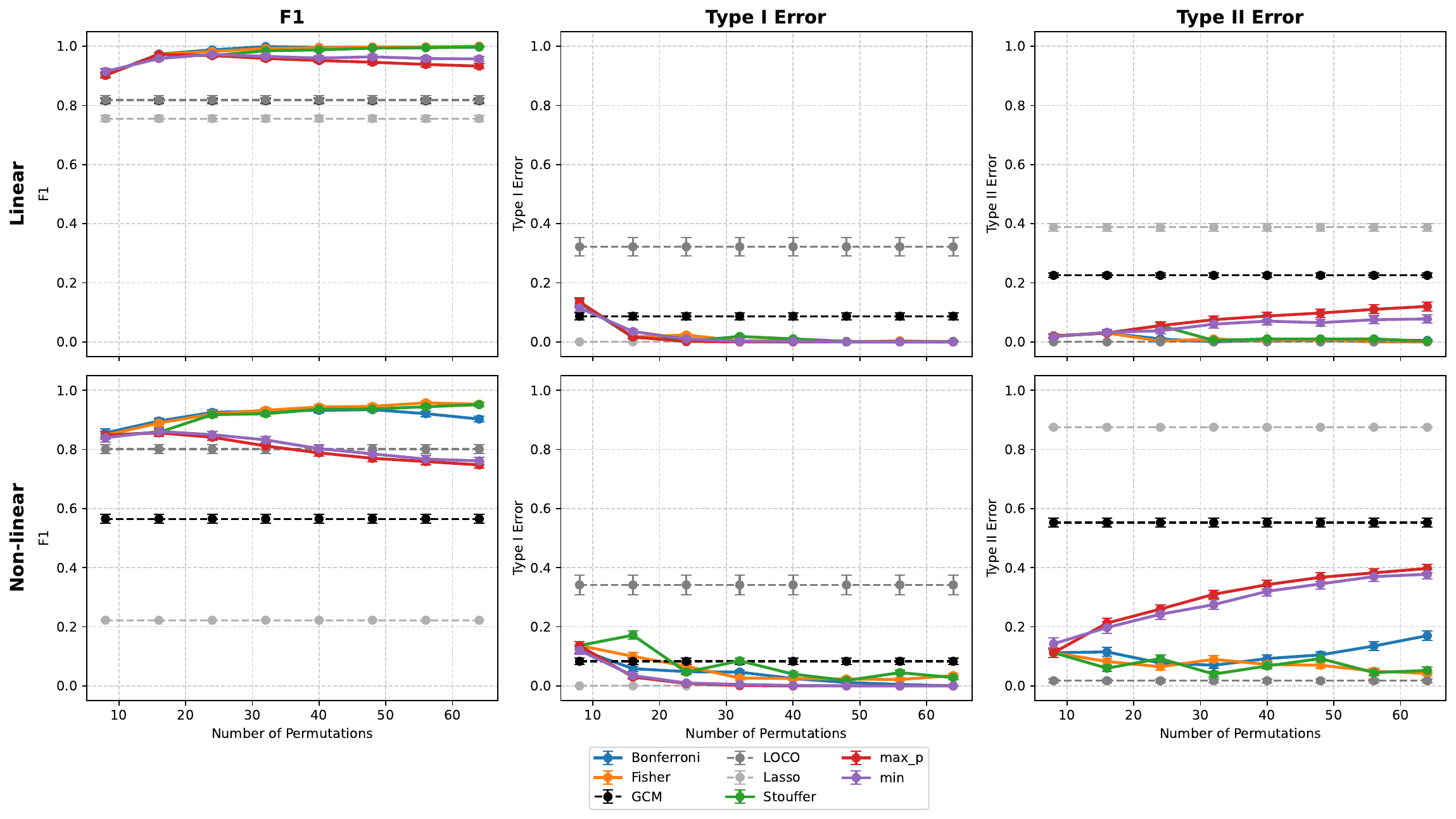}
    \caption{Average performance measure with standard deviation of different number of permutation for linear and non-linear model with sample size 1000. We see that when Type II error is between 0 and 1, more permutations drive higher Type II error of MinShap and Max-p.}
    \label{fig:perm_compare_1000}
\end{figure}

We first assess the effect of the number of permutation $K$ to MinShap and Max-p value methods. For Fig. \ref{fig:perm_compare}, the setup of the model is the same as section \ref{sec:sim}, except that we use different number permutations that is proportional to $|S^*|=8$, with $K$ range from 8 to 64. The screening range $ u \in [K-7,K]$ to avoid overlap with the previous selecting interval. When the sample size is large enough, and the Type II error is extremely close to 0, the more permutations we perform, the higher F1 score MinShap and Max-p value will achieve. The higher number of permutation $K$ drives the Type I error down. Next, we discuss the relation between the number of permutation $K$ and when Type II error is between 0 and 1. We still compare the same linear and non-linear model but with sample size 1000. The number of permutation $K$ still ranges from 8 to 64 with the screening range $ u \in [K-7,K]$ . From Fig. \ref{fig:perm_compare_1000}, we still see that more permutations always drive the Type I error down for all methods. When the Type II error is between 0 and 1, more permutations make the Type II error of MinShap and Max-p increase. Thus, when Type II error is not close to 1, the adjusted p-value methods can be applied.

\begin{figure}[htbp]
    \centering
    \includegraphics[width=0.8\linewidth]{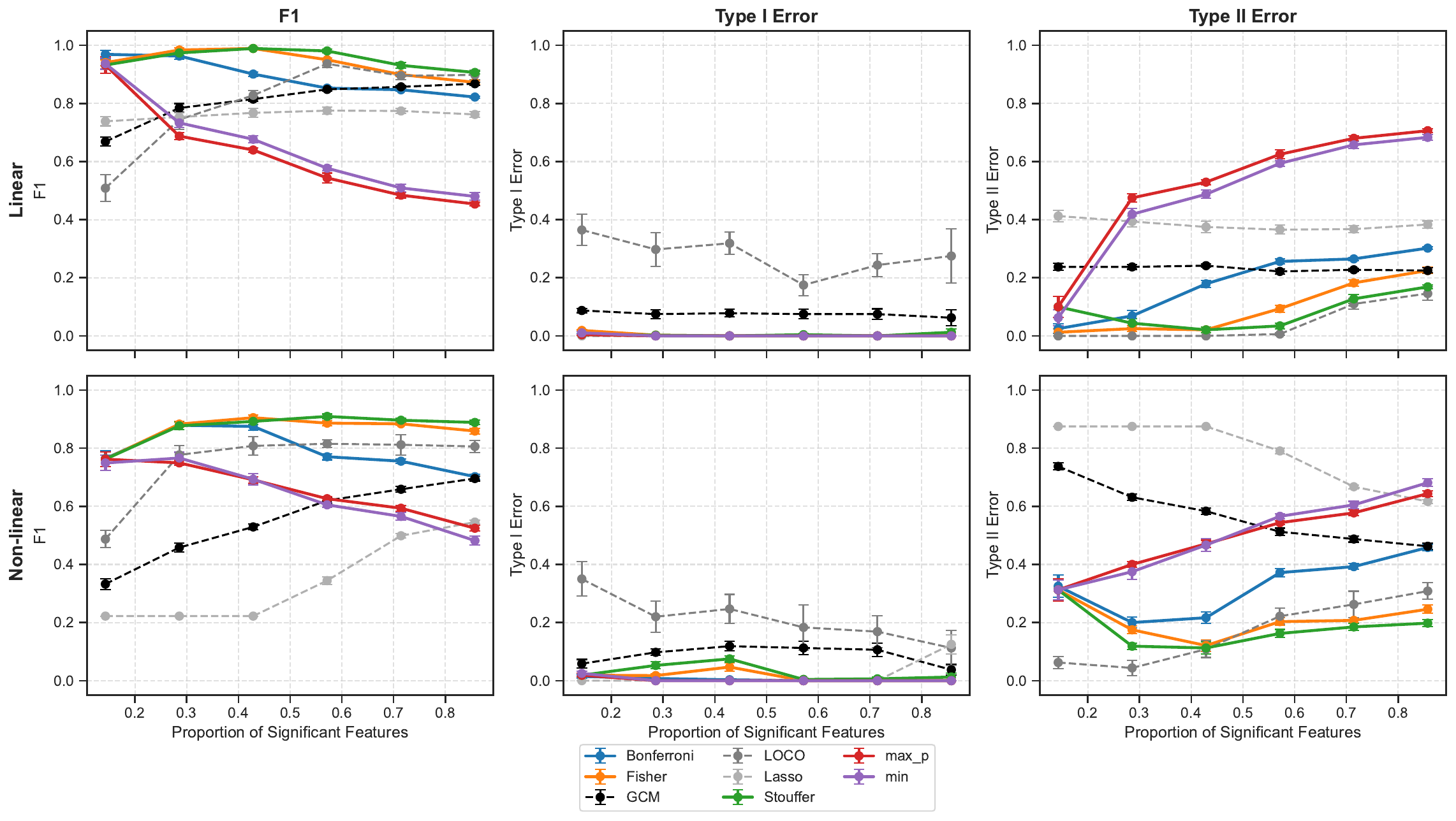}
    \caption{Average performance with standard deviation of varied proportion of significant features for linear and non-linear model. We see the advantage of the adjusted p-value tests in dense regime.}
    \label{fig:sig_prop_compare}
\end{figure}
\begin{figure}[htbp]
    \centering  \includegraphics[width=0.8\linewidth]
{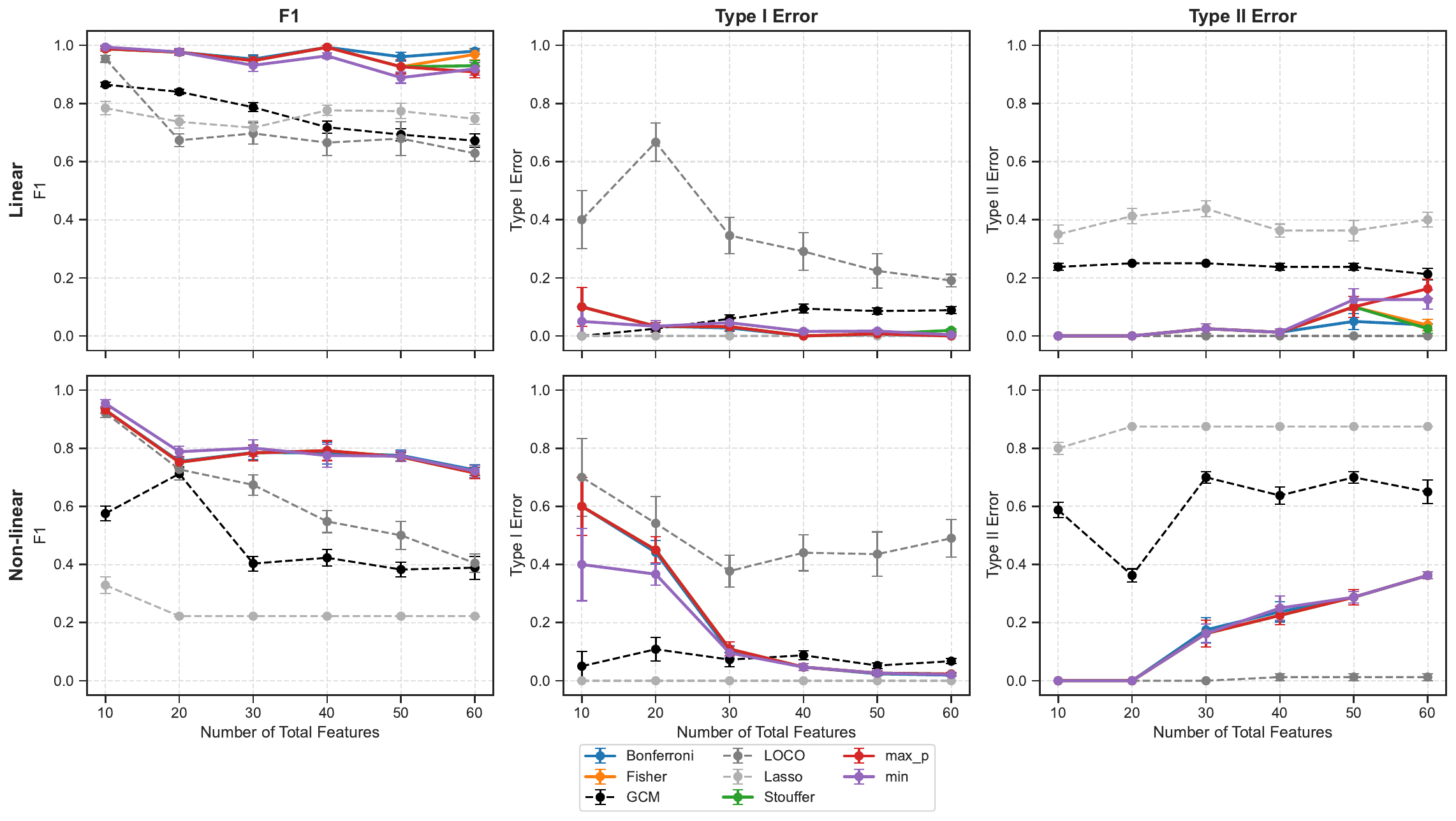}
    \caption{Average performance measure with standard deviation of fixed number of significant features with increasing number of total features for linear and non-linear model. We see that MinShap and Max-p work well under sparsity regime.}
    \label{fig:fix_s_compare}
\end{figure}

Next, we explore how the proportion of significant feature, $|S^*|$, affect the performance of MinShap and Max-p value methods. 
\begin{figure}[htbp!]
    \centering    \includegraphics[width=0.8\linewidth]{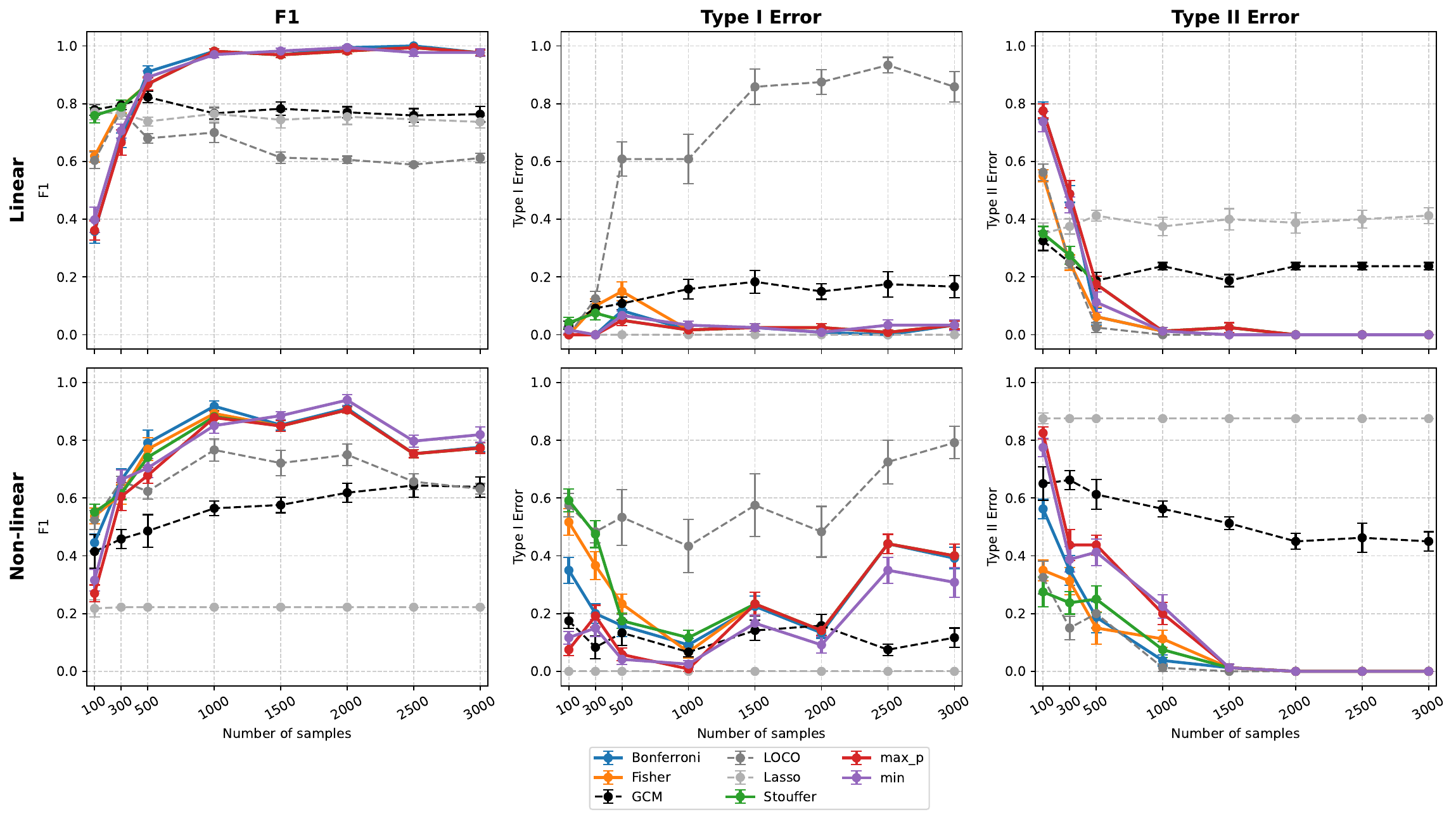}
    \caption{Average performance with standard deviation of different sample size for linear and non-linear model. We see the advantage of the suite of adjusted p-value (Bonferroni, Stouffer, Fisher) tests in lower-sample setting.}
    \label{fig:size_compare}
\end{figure}
We fix $p=56$, and vary the signal proportion by repeating the  same pattern of significant features; each step adds $8$  additional significant features by repeating the linear and nonlinear pattern. We set the number of permutations to $K =2|S^*|$. The best $u$ for adjusted p-value is chosen from  range  $[0.7K,K]$ with the highest F1 score. In Fig.\ref{fig:sig_prop_compare}, we find that the MinShap and Max-p value approach work well when the model is sparse.  As $|S^*|$ increases, the F1 score for both methods gradually decrease whereas F1 score of LOCO and GCM increase. When the sparsity structure of the model does not hold, the adjusted p-value methods can be used to alleviate the conservative effect of MinShap and Max-p caused by dense signal in the model. 

Slightly different from previous setup, we now fix the number of significant features at $|S^*| =8$, but increase the total number of features $p$ from 10 to 60, the number of permutation is $K=2|S^*|$, and the screening range $u \in [0.7K,K]$. From the Fig.\ref{fig:fix_s_compare}, in terms of both F1 score and Type I error, we see that MinShap and Max-p work well compared to other baseline feature selection methods when the sparsity holds, or equivalently, when the noise the in data is getting higher, and the adjusted p-value methods do not help with large improvement in this case.

Then, we compare how sample size $n$ affect their performance. We keep the models' pattern fixed with varied sample size  from $100$ to $3000$, $K =2|S^*|$, and the best $u$ is chosen from range $[0.7K,K]$. In Fig.\ref{fig:size_compare}, we find that when sample size is small, GCM and LOCO have higher F1 score than MinShap and Max-p, the Type II error of MinShap and Max-p is large and our adjustment methods such as Bonferroni, Stouffer and Fisher help reduce Type II error.

To summarize, the MinShap and Max-p coincide with adjusted p-value methods when the Type II error is close to 1. When the Type I error lies between 0 and 1, increasing the number of permutations decreases the power of our MinShap and Max-p. Also, MinShap and Max-p work well when the model is sparse and the sample size is relatively large. As the number features increase, the sample size decreases or the sparsity does not hold, the Bonferroni, Stouffer and Fisher adjusted p-value methods improve performance. In particular, Bonferroni is relatively more conservative, and Fisher and Stouffer yield comparable power.

\subsection{High-dimensional setting}

Next, we focus on high-dimensional setting. We set the total number of features $p=200$, and the pattern of significant features repeats twice (ie: $|S^*|=16$ in linear and non-linear models), the remaining $168$ variables are independent noise variables, the number of permutation $K=1.5|S^*|$, the screening level of $u$ in $[0.6K,K]$ and the sample size increases from 200 to 500. From Fig.\ref{fig:HD}, we see that Lasso works well in sparse linear model, but fails to keep its performance in non-linear model. In high-dimensional case, when the sample size is small, the performance of MinShap and Max-p is not ideal even if the Type I error is still under controlled. The Type II error of MinShap and Max-p is affected by the sample size. As sample size increases, the Type II error decreases. In linear model, the adjusted Stouffer p-value method performs slightly better than LOCO and GCM. In non-linear model, the Stouffer and Fisher adjusted p-value methods are consistently better than other baseline methods. Thus, in the high-dimensional setting where $p$ is relatively close to $n$, the adjusted p-value methods are favored.

\begin{figure}[htbp]
    \centering
\includegraphics[width=0.8\linewidth]{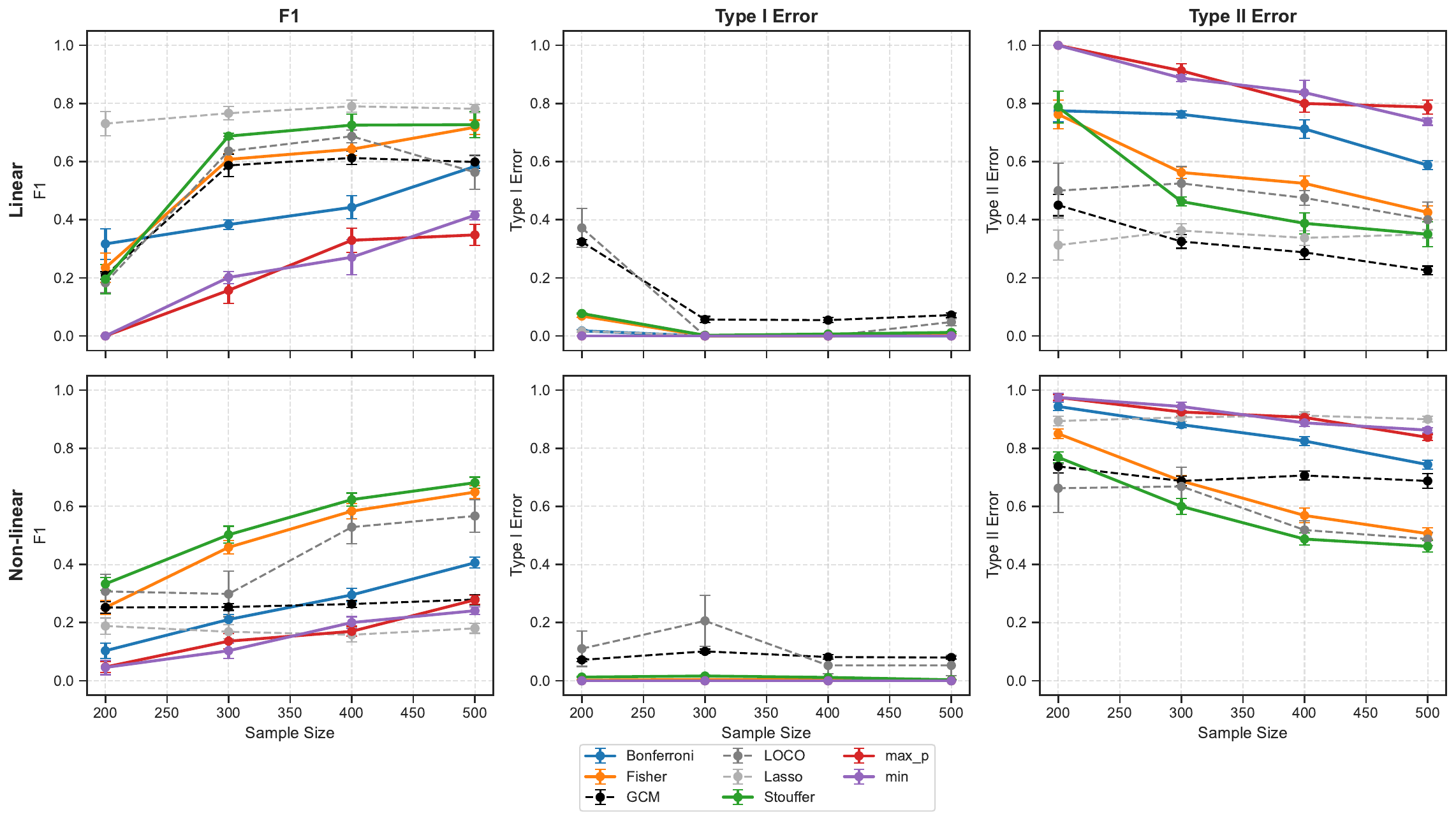}
    \caption{Average performance with standard deviation of high-dimensional setting for linear and non-linear model. We see the advantage
of the suite of adjusted p-value tests in in high-dimensional setting.}
    \label{fig:HD}
\end{figure}

\subsection{Real Data Analysis}
In this section, we apply the variable selection methods MinShap, Max-p, GCM and LOCO to two real and well-studied datasets, the well-known wine quality and California housing datasets. Each dataset is analyzed using 5-fold cross-validation. Variable selection is carried out within each training fold, and the mean squared prediction error on the corresponding test fold is used to evaluate predictive performance. In addition, the stability of the feature selections is measured by Jaccard index. Additional real-data experiment is provided in the Appendix \ref{app:real}.

\paragraph{Wine quality data:}
With increasing interest in wine, quality certification has become essential, relying on sensory evaluations by human experts. Our goal is to predict wine quality using objective analytical measurements collected during the certification process. The Wine Quality dataset, introduced by Cortez et al.~\cite{cortez_modeling_2009}, consists of $n = 1599$ red wine samples with $p = 11$ physicochemical features (e.g., acidity, alcohol content, etc.). The response is a wine quality score on a $0$--$10$ scale, reflecting the perceived quality of each wine. From Table \ref{wine_tab}, we find that features selected by LOCO are much less stable than other methods. The most
significant features frequently selected by both MinShap and Max-p using XGBoost and random forest(ie: "alcohol", "sulphates" and "volatile
acidity") suggest that physio-chemical properties play an important role in predicting wine quality.
These findings are consistent with prior literature on wine quality analysis~\cite{molnar_relating_2023}. GCM always select both significant and moderate significant features, such as "pH", "density", etc.~\cite{jain_machine_2023,tchakounte_wine_2024}.

\begin{table}[h]
\centering
\caption{Comparison of methods in terms of prediction error and selection stability on wine quality data under two models.}
\small
\renewcommand{\arraystretch}{1.2}
\begin{tabular}{llccc p{7cm}}
\hline
\multicolumn{1}{c}{Model} &
\multicolumn{1}{c}{Method} &
\multicolumn{1}{c}{MSE} &
\multicolumn{1}{c}{se(MSE)} &
\multicolumn{1}{c}{Jaccard index} &
\multicolumn{1}{c}{Selected Features} \\
\hline

\multirow{4}{*}{XGBoost}
& MinShap & 0.4045 & 0.0152  & 1.00 &
[`alcohol', `sulphates', `total sulfur dioxide', `volatile acidity'] \\
& Max-p   & 0.4048 & 0.01495 & 0.92 &
[`alcohol', `sulphates', `total sulfur dioxide', `volatile acidity'] \\
& LOCO    & 0.4027 & 0.01580 & 0.69 &
[`alcohol', `sulphates', `total sulfur dioxide', `volatile acidity'] \\
& GCM     & 0.4039 & 0.01548 & 0.83 &
[`alcohol', `density', `pH', `sulphates', `total sulfur dioxide', `volatile acidity'] \\

\hline

\multirow{4}{*}{\shortstack{Random\\forest}}
& MinShap & 0.4288 & 0.0156 & 1.00 &
[`alcohol', `sulphates', `volatile acidity'] \\
& Max-p   & 0.4288 & 0.0156 & 1.00 &
[`alcohol', `sulphates', `volatile acidity'] \\
& LOCO    & 0.4285 & 0.0156 & 0.69 &
[`alcohol', `sulphates', `volatile acidity'] \\
& GCM     & 0.4244 & 0.0163 & 0.83 &
[`alcohol', `chlorides', `density', `fixed acidity', `pH', `sulphates', `total sulfur dioxide', `volatile acidity'] \\
\hline
\end{tabular}
  \vskip -0.1in
\label{wine_tab}
\end{table}

\paragraph{California housing data:}
Accurate prediction of house prices requires accounting for many factors. We are interested in reducing pricing discrepancies by feature selection for predicting real estate prices. The California housing dataset, originally constructed by Pace and Barry (1997)~\cite{kelley_pace_sparse_1997} using data derived from the 1990 U.S. Census, contains $n = 20{,}640$ observations of California districts with $p = 8$ predictive features. These features describe demographic and housing characteristics, including median income (\texttt{MedInc}), median house age (\texttt{HouseAge}), average number of rooms and bedrooms (\texttt{AveRooms}, \texttt{AveBedrms}), population, average number of people per household in that district(\texttt{AveOccup}), and geographic location (\texttt{latitude} and \texttt{longitude}). The response variable is the median house value in each district in units of \$100,000. From Table \ref{tab:cal}, we see that both GCM and LOCO tend to select more moderate or less significant features. The feature selection performed by MinShap and Max-p are stable and always choosing the most significant features such as household income and the location of the property, which is aligned with prior studies
~\cite{capdevielle_feature_2025,chen_analysis_2023}.

\begin{table}[h]
\centering
\caption{Comparison of methods in terms of prediction error and selection stability on California housing data under two models.}
\small
\renewcommand{\arraystretch}{1.2}
\begin{tabular}{llccc p{7cm} }
\hline
\multicolumn{1}{c}{Model} &
\multicolumn{1}{c}{Method} &
\multicolumn{1}{c}{MSE} &
\multicolumn{1}{c}{se(MSE)} &
\multicolumn{1}{c}{Jaccard index} &
\multicolumn{1}{c}{Selected Features} \\
\hline

\multirow{4}{*}{XGBoost}
& MinShap & 0.3281 & 0.0079 & 0.92 &
[`AveOccup', `Latitude', `Longitude', `MedInc'] \\
& Max-p   & 0.3283 & 0.0080 & 0.82 &
[`AveOccup', `Latitude', `Longitude', `MedInc'] \\
& LOCO    & 0.3262 & 0.0083 & 0.77 &
[`AveOccup', `AveRooms', `HouseAge', `Latitude', `Longitude', `MedInc', `Population'] \\
& GCM     & 0.3257 & 0.0081 & 0.84 &
[`AveBedrms', `AveOccup', `HouseAge', `Latitude', `Longitude', `MedInc', `Population'] \\
\hline

\multirow{4}{*}{\shortstack{Random\\forest}}
& MinShap & 0.3456 & 0.0075 & 1.00 &
[`Latitude', `Longitude', `MedInc'] \\
& Max-p   & 0.3450 & 0.0072 & 0.90 &
[`Latitude', `Longitude', `MedInc'] \\
& LOCO    & 0.3204 & 0.0076 & 0.88 &
[`AveOccup', `AveBedrms', `HouseAge', `Latitude', `Longitude', `MedInc', `Population'] \\
& GCM     & 0.3162 & 0.0068 & 0.95 &
[`AveBedrms', `AveOccup', `HouseAge', `Latitude', `Longitude', `MedInc', `Population'] \\
\hline
\end{tabular}
\vskip -0.1in
\label{tab:cal}
\end{table}

\section{Discussion and Future Work}

In this paper, we adapt the Shapley value framework and show that, by replacing
the aggregation operator with the minimum, we obtain a feature selection
procedure under a null monotonicity condition. We discuss conditions under which
the null monotonicity condition is satisfied both in the statistical feature
selection setting and in mechanistic interpretability settings such as sparse
autoencoders and circuit-based explanations
\cite{elhage_mathematical_2021,bricken_towards_2023,cunningham_sparse_2023,wang_interpretability_2023}.

We then specialize to the model-agnostic statistical feature selection setting,
where we provide theoretical guarantees on Type I error for MinShap and show how
the framework can be combined with multiple testing procedures to provide
additional flexibility. Through simulations and real data examples, we
demonstrate that MinShap outperforms existing state-of-the-art methods such as
LOCO~\cite{lei_distribution-free_2018} and GCM~\cite{shah_hardness_2020} in terms of feature selection accuracy, F1 score, and stability. There are several promising directions for future research.

\paragraph{MinShap for mechanistic interpretability.}

The key assumption underpinning the MinShap algorithm is null monotonicity.
As discussed in Section~\ref{sec:minshap}, modern mechanistic interpretability
approaches, including sparse autoencoders
\cite{bricken_towards_2023,cunningham_sparse_2023}
and circuit-based analyses of neural networks
\cite{elhage_mathematical_2021,wang_interpretability_2023},
often approximately satisfy this assumption through additive or modular
representations. Consequently, the MinShap framework naturally extends beyond
statistical feature selection. Based on our theoretical analysis in the
statistical setting, one may expect MinShap to provide more stable identification
of redundant explanatory units than existing approaches. Establishing analogous
Type I error guarantees, deriving stability results, and evaluating MinShap on
modern large-scale neural network models remain interesting directions for
future work.

\paragraph{Algorithms for discovering minimal redundant subsets.}

As discussed in Section~\ref{sec:minshap}, exactly solving the MinShap
optimization problem requires minimizing over either all subsets or all
permutations, which is computationally intractable in general. The algorithm
considered in this paper approximates the minimum using randomly sampled
permutations. However, the null monotonicity property suggests that more
efficient optimization strategies may be possible. In particular, greedy
algorithms for submodular optimization~\cite{nemhauser_analysis_1978,krause_submodular_2014}
and branch-and-bound methods~\cite{lawler_branch-and-bound_1966} are natural
candidates for exploiting the upward-closed structure induced by null
monotonicity. Developing efficient optimization algorithms with theoretical
approximation guarantees remains an interesting direction for future work.

\paragraph{Other settings satisfying null monotonicity.}

Although the primary focus of this paper is statistical feature selection, and
to a lesser extent mechanistic interpretability, the null monotonicity principle
may arise in a broader range of interpretability and machine learning problems.
Potential examples include concept-based interpretability methods such as
TCAV~\cite{kim_interpretability_2018}, symbolic regression
\cite{koza_genetic_1992,udrescu_ai_2020}, and other structured representation
learning frameworks in which explanatory units admit stable hierarchical or
modular decompositions. Identifying additional settings in which null
monotonicity holds, together with studying whether MinShap provides reliable and
stable feature selection in these domains, remains an interesting direction for
future research.

\bibliographystyle{plain}
\bibliography{Shapley2.bib}
\newpage
\begin{appendices}
\section{Appendix - Theorem Related Proofs}

\subsection{Proof of Lemma~\ref{lem:predictive-sufficiency}}\label{pf_lemma_Predictive_sufficiency}

Since $\ell$ is a strictly proper scoring rule, the Bayes-optimal predictor
is the conditional distribution
\[
f_S^*(X_S)=P(Y\mid X_S).
\]
Moreover, the optimal expected score can be written as
\[
V(S)
=
C-\mathcal U(Y\mid X_S),
\]
where $C$ is a constant independent of $S$ and
$\mathcal U(\cdot)$ is the uncertainty functional induced by the loss.
For example,
\[
\mathcal U(Y\mid X_S)
=
H(Y\mid X_S)
\]
for logarithmic loss and
\[
\mathcal U(Y\mid X_S)
=
\mathbb E[\operatorname{Var}(Y\mid X_S)]
\]
for squared loss.

Suppose first that
\[
Y\independent X_{-S}\mid X_S.
\]
Then
\[
P(Y\mid X_S,X_{-S})
=
P(Y\mid X_S),
\]
so conditioning on the additional variables does not reduce uncertainty:
\[
\mathcal U(Y\mid X_S)
=
\mathcal U(Y\mid X).
\]
Hence
\[
V(S)
=
C-\mathcal U(Y\mid X_S)
=
C-\mathcal U(Y\mid X)
=
V([p]).
\]

Conversely, suppose that
\[
V(S)=V([p]).
\]
Since
\[
V(S)=C-\mathcal U(Y\mid X_S),
\]
we obtain
\[
\mathcal U(Y\mid X_S)
=
\mathcal U(Y\mid X).
\]
Because proper losses are strictly information-monotone, equality occurs
only when
\[
P(Y\mid X_S,X_{-S})
=
P(Y\mid X_S)
\quad\text{almost surely}.
\]
This is precisely the conditional independence relation
\[
Y\independent X_{-S}\mid X_S.
\]

Therefore,
\[
V(S)=V([p])
\iff
Y\independent X_{-S}\mid X_S,
\]
and the two minimum-cardinality optimization problems have identical
solution sets.

\subsection{Proof of Theorem \ref{main_thm}}\label{app:thm}
\begin{proof}
By Theorem \ref{thm:ordering}, we know that under the faithfulness assumption and there is no data leakage assumption, we are performing the following hypothesis testing:
    \[H_{0}: \min_{\pi \in \Pi(p)} VI_{0,j}^{\pi} = 0 \Leftrightarrow Y \independent X_j | X_{-j}\;;\; H_{a}: \min_{\pi \in \Pi(p)} VI_{0,j}^{\pi} >  0 \Leftrightarrow Y \notindependent X_j |X_{-j}\] 

Let's randomly select $K$ permutations, and recall that we define $\Pi = (\pi_1, \pi_2,...,\pi_K)$, and $VI_{n,j}^{\pi_k}, k \in [K]$ as the estimated variable importance of feature $j$ under permutation $\pi_k$. Define $t_j$ as the thresholds we use to decide if we  reject the null hypothesis. Define an event $A=\{\text{For a given order of permutation $\pi_k$: $VI_{n,j}^{\pi_k} \leq t_j $}\}$, and event $\widetilde{A}$ =\{Among the $K$ randomly selected order of permutations, there exists at least one order $\pi_k, k \in [K]$ s.t. $VI_{n,j}^{\pi_k} \leq t_j$\}.

Suppose we have $p$ features, and there are $|S_0^*|$ significant features.
Let $\widetilde{B} = \{B_1,B_2,...,B_I\}$ be the collection of minimal feature subset of feature $j$, where each $B_i \subset [p]\backslash \{j\}$ s.t. $V(B_i \cup \{j\}) -V(B_i) = 0$. Then, define the event $E(B_i) = \{\text{feature $j$ is placed after all features in $B_i$}\}$, we have
\[\mathbb{P}(E(B_i)) = \frac{\binom{p}{|B_i|+1}\cdot |B_i|!}{\binom{p}{|B_i|+1}\cdot |B_i+1|!} = \frac{1}{|B_i|+1}\]
\[\mathbb{P}(A) = \mathbb{P}(E(\widetilde{B})) = \mathbb{P}(\bigcup_{i=1}^I E(B_i))\geq \max_{B_i \in \tilde{B}} \mathbb{P}(E(B_i)) = \frac{1}{\min_{B_i \in \tilde{B}} |B_i| + 1} \geq \frac{1}{|S_0^*|+1}\]
Then, the probability that at least one out of K independent permutations yielding $VI_{n,j}^{\pi_k} = 0, k \in [K]$ is:
\[\mathbb{P}(\widetilde{A}) = 1-(1-\mathbb{P}(A))^K \geq 1-(1-\frac{1}{|S_0^*|+1})^K\]
Thus, for any fixed $\epsilon \in (0,1)$, if $K$ is sufficiently large with order $K =O( |S_0^*| )$ then we have $\mathbb{P}(\widetilde{A}) > 1-\epsilon$.
Finally, let's justify that the test statistic $VI_{n,j}^{(1)}$ is under Type I error control:
\begin{equation*}
    \begin{split}
        \mathbb{P}(VI_{n,j}^{(1)} >t_j|H_0) &= \mathbb{P}(VI_{n,j}^{(1)} >t|H_0, \widetilde{A}) \cdot \mathbb{P}(\widetilde{A})\\  
        &\leq \exp (\frac{1}{2} \lambda^2 \sigma_{n,j}^{2(1)} -\lambda t) \text{, by Chernoff Inq}
    \end{split}
\end{equation*}
To give a tight bound, we optimize $\lambda$ by minimizing the quadratic  equation, and we get $\hat{\lambda} = \frac{t}{\sigma^{2,(1)}_{n,j}}$. Then, we have 
\begin{equation*}
    \begin{split}
        \mathbb{P}(VI_{n,j}^{(1)} >t_j|H_0) &\leq \exp\big(\frac{1}{2} \frac{t^2}{\sigma_{n,j}^{2,(1)}}-\frac{t^2}{\sigma_{n,j}^{2,(1)}}\big)\\
        & =  \exp\big( -\frac{1}{2} \frac{t^2}{\sigma_{n,j}^{2,(1)}}\big)< \alpha
    \end{split}
\end{equation*}
Thus, we have threshold $t_j>\sqrt{-2\ln(\alpha)\sigma_{n,j}^{2,(1)}}$.
\end{proof}
\subsection{Type II error control}\label{app:type2}
To provide some further intuition of maximizing power, let's define an event $C=\{\text{For a given order}$ $\text{ of permutation $\pi_k$: $VI_{n,j}^{\pi_k} > t_j$}\}$.
\begin{equation*}
\begin{split}
        \text{Type II error} = 1-\mathbb{P}(VI_{n,j}^{(1)} >t_j|H_a)&= 1-\mathbb{P}(\bigcap_{k=1}^K VI_{n,j}^{\pi_k} > t|H_a)\\
        &=1-\prod_{k=1}^K \mathbb{P}(VI_{n,j}^{\pi_k} > t |H_a) \\
        & = 1-\mathbb{P}(C|H_a)^K
\end{split}
\end{equation*}
This result suggests that more permutations drive the Type II error down when $P(C|H_a)$ is close to 1. If $P(C|H_a) \in (0,1)$, fewer permutations $K$ yields better Type II error control. 

\subsection{Proof of Corollary \ref{maxp_cor}}\label{app:cor}
The MinShap uses concentration inequality to generate threshold whereas Max-p uses normal approximation and multiple hypothesis testing.
\begin{proof}
Under the union of $K$ null hypothesis, there is at least one true null. The proof for the equivalence that Max-p is also under Type I error control is easy to see, since 
\begin{equation*}
    \begin{split}
        \mathbb{P}(p_{n,j}^{(K)} < \alpha|H_0) &=\mathbb{P}(|Z_{n,j}^{(1)} |> t)  \\
        &\leq \mathbb{P}(|Z_{n,j}^{(k)}|>t)= \alpha
    \end{split}
\end{equation*}
where $Z_{n,j}^{(k)} = \frac{VI_{n,j}^{(k)}}{\sigma_{n,j}^{(k)}}$, $k \in [K]$ and $t = \Phi^{-1}(1-\alpha/2)$.
\end{proof}

\section{Appendix - Algorithms}\label{app:alg}
This section includes two algorithms: the first one is for Monte carol permutation sampling approximation of Shapley value based on feature importance context (see Algorithm \ref{alg:meanshap}), and the second one is the Max-p related p-value algorithm (see Algorithm \ref{alg:maxp}).

\begin{algorithm}[H]
\caption{Monte Carlo Permutation Sampling Approximation of Variable-Importance Shapley values}
\label{alg:meanshap}
\begin{algorithmic}[1]
\Require Dataset $(X,Y)$; measure $V$; number of features $p$; number of permutations $K$

\State Fit null model $f_{n,\emptyset}$; set $V_{\emptyset}\leftarrow V(f_{n,\emptyset})$
\State Initialize $\phi_{n,j}\leftarrow 0$ for all $j\in[p]$

\For{$k=1,\ldots,K$}
  \State Sample a random permutation $\pi_k$
  \State Set $V_{\mathrm{cur}} \leftarrow V_{\emptyset}$
  \For{$j=1,\ldots,p$}
    \State Set predecessor set: $\mathcal{P}^{\pi_k}_{j-1} \leftarrow \{\, i\in [p]: \pi_k(i) < \pi_k(j)\,\}$
    \State Fit model $f_{n,\mathcal{P}^{\pi_k}_{j-1}\cup\{j\}}$ and compute $V_{\mathrm{new}}\leftarrow V\!\big(f_{n,\mathcal{P}^{\pi_k}_{j-1}\cup\{j\}}\big)$
    \State $VI_{n,j}^{\pi_k}\leftarrow V_{\mathrm{new}} - V_{\mathrm{cur}}$
    \State $\phi_{n,j}\leftarrow \phi_{n,j}+VI_{n,j}^{\pi_k}$
    \State $V_{\mathrm{cur}}\leftarrow V_{\mathrm{new}}$
  \EndFor
\EndFor

\State \textbf{Return} $\phi_{n,j}\leftarrow \phi_{n,j}/K$ for all $j\in[p]$
\end{algorithmic}
\end{algorithm}

\begin{algorithm}[tb]
\caption{Max-$p$ and adjusted $p$-value testing for feature selection}
\label{alg:maxp}
\begin{algorithmic}[1]
\Require Dataset $(X,Y)$; measure $V$; features $p$;  number of permutations $K$; significance level $\alpha$
\Ensure Adjusted $p$-values $\tilde p^{\mathrm{Bon}}_u,\tilde p^{\mathrm{Fisher}}_u,\tilde p^{\mathrm{Stou}}_u$ and $p^{\mathrm{Max}}$ for each feature $j$

\State Fit null model $f_{n,\emptyset}$; set $V_{\emptyset}\leftarrow V(f_{n,\emptyset})$ and residuals $\mathbf e^2_{\emptyset}$
\State Allocate arrays $\phi^{\pi_k}_{n,j}$ and $\sigma^{2,\pi_k}_{n,j}$ for $j\in[p],\,k\in[K]$

\For{$k=1,\ldots,K$}
  \State Sample permutation $\pi_k$
  \State Set $V_{\mathrm{cur}}\leftarrow V_{\emptyset}$ and $\mathbf e^2_{\mathrm{cur}}\leftarrow \mathbf e^2_{\emptyset}$
  \For{$j=1,\ldots,p$}
    \State $\mathcal P^{\pi_k}_{j-1}\leftarrow\{\, i\in[p]: \pi_k(i)<\pi_k(j)\,\}$
    \State Fit $f_{n,\mathcal P^{\pi_k}_{j-1}\cup\{j\}}$; compute $V_{\mathrm{new}}$ and residuals $\mathbf e^2_{\mathrm{new}}$
    \State $\phi^{\pi_k}_{n,j}\leftarrow V_{\mathrm{new}}-V_{\mathrm{cur}}$; $\sigma^{2,\pi_k}_{n,j}\leftarrow Var(\mathbf e^2_{\mathrm{new}}-\mathbf e^2_{\mathrm{cur}})/n$
    \State $V_{\mathrm{cur}}\leftarrow V_{\mathrm{new}}$; $\mathbf e^2_{\mathrm{cur}}\leftarrow \mathbf e^2_{\mathrm{new}}$
  \EndFor
\EndFor

\For{$j=1,\ldots,p$}
  \State Convert $\{\phi^{\pi_k}_{n,j},\sigma^{2,\pi_k}_{n,j}\}_{k=1}^K$ to $z$-scores and $p$-values $\{p_{n,j}^{\pi_k}\}_{k=1}^K$
  \State Sort $p_{n,j}^{(1)}\le\cdots\le p_{n,j}^{(K)}$ and $|z|_{n,j}^{(1)}\le\cdots\le |z|_{n,j}^{(K)}$
  \State \textbf{Bonferroni-Holm:} for $u=1,\ldots,K$, set $p^{\mathrm{Bon}}_u\leftarrow (K-u+1)p_{n,j}^{(u)}$
  \State $\tilde p^{\mathrm{Bon}}_1\leftarrow p^{\mathrm{Bon}}_1$\; ; \; for $u=2,\ldots,K$ set $\tilde p^{\mathrm{Bon}}_u\leftarrow \min\big(\max(\tilde p^{\mathrm{Bon}}_{u-1},p^{\mathrm{Bon}}_u),1\big)$
  \State \textbf{Fisher-Holm:} for $u=1,\ldots,K$, compute $T^{\mathrm{Fisher}}_u\leftarrow -2\sum_{k=u}^K \log p_{n,j}^{(k)}$ and $p^{\mathrm{Fisher}}_u\leftarrow 1-F_{\chi^2_{2(K-u+1)}}(T^{\mathrm{Fisher}}_u)$
  \State $\tilde p^{\mathrm{Fisher}}_1\leftarrow p^{\mathrm{Fisher}}_1$ \; ; \; for $u=2,\ldots,K$ set $\tilde p^{\mathrm{Fisher}}_u\leftarrow \max(\tilde p^{\mathrm{Fisher}}_{u-1},p^{\mathrm{Fisher}}_u)$
  \State \textbf{Stouffer-Holm:} for $u=1,\ldots,K$, set $z^{\mathrm{Stou}}_u\leftarrow \frac{\sum_{k=1}^{K-u+1}|z|_{n,j}^{(k)}}{\sqrt{K-u+1}}$ and $p^{\mathrm{Stou}}_u\leftarrow 2\big(1-\Phi(|z^{\mathrm{Stou}}_u|)\big)$
  \State $\tilde p^{\mathrm{Stou}}_1\leftarrow p^{\mathrm{Stou}}_1$ \; ; \; for $u=2,\ldots,K$ set $\tilde p^{\mathrm{Stou}}_u\leftarrow \max(\tilde p^{\mathrm{Stou}}_{u-1},p^{\mathrm{Stou}}_u)$
  \State \textbf{Max-$p$:} $p^{\mathrm{Max}}\leftarrow p_{n,j}^{(K)}$
  \State Decision: reject $H_{0,j}$ if the selected $p$-value $<\alpha$
\EndFor
\end{algorithmic}
\end{algorithm}

\section{Appendix - Simulation}
\subsection{Experiment on neural network}\label{app:sim_nn}

Our modified Shapley value based variable selection procedure is model-agnostic and thus not restricted to tree-based algorithm. In addition to XGBoost, we also evaluate the method using a two-layer ReLU network as the predictive model for comparison. Since neural network training is more computationally intensive, we run these experiments at a reduced scale, decreasing the feature dimension from $p=20$ to $p=10$ with only the first 5 features significant in all 4 models, and the sample size from $n=3000$ to $n=1000$. The number of simulation is 50 and the number of permutation $K$ is 10 for MinShap and Max-p value approaches. 
\begin{figure}[tb]
\vskip 0.1in
    \centering
    \includegraphics[width=0.8\linewidth]{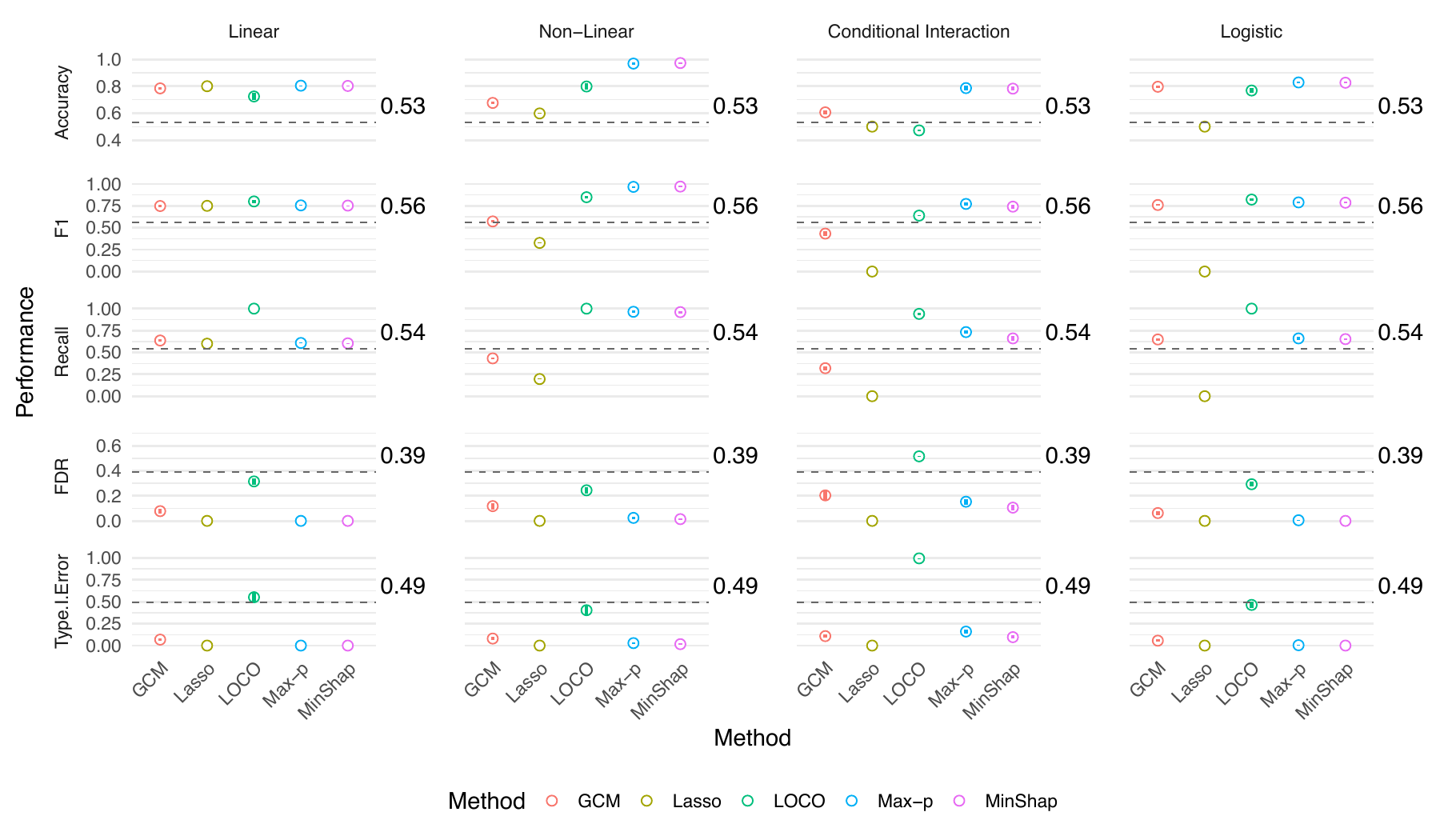}
    \caption{Average performance measure with standard deviation for model (a)-(d) for feature selection comparison using Neural Network. We see that MinShap and Max-p outperform other feature selection methods and have well-controlled Type I error in the meanwhile.}
    \label{fig:all_perf_nn}
\end{figure}

\begin{table}[h]
\centering
\caption{Jaccard Index for the stability of feature selection methods on model (a)--(d) using Neural Network.}
\label{tb:jaccard_nn}
\begin{small}
\begin{tabular}{@{}lccccc@{}}
\toprule
& \multicolumn{5}{c}{Feature Selection Method} \\
\cmidrule(lr){2-6}
Model & MinShap & Max-p & LOCO & GCM & Lasso \\
\midrule
(a)  & 0.99 & 0.98 & 0.73& 0.79&1.0 \\
(b)&  0.91& 0.89 &0.72 &0.72 &0.96\\
(c)&0.57&0.49&0.94 &0.38 & 1.0\\
(d)  & 0.89&0.87 &0.72 & 0.80&1.0\\
\bottomrule
\end{tabular}
\end{small}
\vskip -0.1in
\end{table}
The overall performance comparison is presented in Fig.\ref{fig:all_perf_nn}. MinShap and Max-p value approaches are much better than GCM, achieving higher accuracy and F1 score in all four models. Although the F1 score and accuracy for LOCO are similar to that of MinShap and Max-p in model (a) and (d), LOCO fails to control false positives; empirically, its Type I error is worse than that of a random selector with probability 0.5, and false discovery rate is higher than all other methods across all models. From Jaccard index Table \ref{tb:jaccard_nn}, we see that the stability of MinShap and Max-p are higher than LOCO and GCM in models (a), (b) and (d). The selection stability of LOCO in model (c) is high, but its inflated false positives undermine reliability. Similarly, Lasso remains stable, but it is not reliable in all nonlinear settings with low F1 score.

\section{Additional Real Data Analysis}\label{app:real}

In this section, we still mainly apply the variable selection methods of MinShap, Max-p, GCM and LOCO to additional real dataset. The dataset is analyzed using 5-fold cross-validation. Variable selection is carried out within each training fold, and the mean squared prediction error on the corresponding test fold is used to evaluate predictive performance. In addition, the stability of the feature selections is measured by Jaccard index.

\begin{table}[h]
\centering
\caption{Comparison of methods in terms of prediction error and selection stability on diabetes data under two models.}
\small
\renewcommand{\arraystretch}{1.2}
\begin{tabular}{llccc >{\centering\arraybackslash}p{5cm}}

\hline
\multicolumn{1}{c}{Model} &
\multicolumn{1}{c}{Method} &
\multicolumn{1}{c}{MSE} &
\multicolumn{1}{c}{se(MSE)} &
\multicolumn{1}{c}{Jaccard index} &
\multicolumn{1}{c}{Selected Features} \\
\hline

\multirow{4}{*}{XGBoost}
& MinShap & 3455.04 & 112.99 & 0.80 & [`bmi', `s5'] \\
& Max-p   & 3431.36 & 124.33 & 0.73 & [`bmi', `s5'] \\
& LOCO    & 3310.99 & 163.90 & 0.63 & [`bmi', `bp', `s5', `sex'] \\
& GCM     & 3228.06 & 130.00 & 0.81 & [`bmi', `bp', `s5', `sex'] \\
\hline

\multirow{4}{*}{\shortstack{Random\\forest}}
& MinShap & 3708.00 & 228.36 & 0.70 & [`bmi', `s5'] \\
& Max-p   & 3429.36 & 137.94 & 0.80 & [`bmi', `s5'] \\
& LOCO    & 3437.57 & 178.67 & 0.47 & [`bmi', `bp'] \\
& GCM     & 3294.31 & 133.62 & 0.85 & [`bmi', `bp', `s2', `s3', `s5', `sex'] \\
\hline
\end{tabular}
\vskip -0.1in
\label{dia_tb}
\end{table}
\subsection{Diabete data}
Understanding the link between clinical and biochemical measurements to disease progression is essential to medical decision-making. We aim to predict how patients' characteristics can be used to predict health outcomes with diabete dataset which is available in the \texttt{sklearn} Python library. The data has ten base features, age, sex, body mass index (BMI), average blood pressure (BP), and six blood serum measurements. Data is obtained for each of 442 diabetes patients, as well as the response of interest, a quantitative measure of disease progression one year after the time of measurement of the base features. From Table \ref{dia_tb}, we see that GCM always selects more variables than other methods, and the selection stability of LOCO is inferior to others. Features selected by MinShap and Max-p, "bmi" (body mass index) and "s5" (log of serum triglycerides level), using XGBoost and random forest are the most commonly selected in previous studies~\cite{bzdok_inference_2020, efron_least_2004}.

\end{appendices}
\end{document}